\theoremstyle{plain}
\newtheorem{theorem}{Theorem}[section]
\theoremstyle{definition}
\theoremstyle{remark}
\newcommand\scalemath[2]{\scalebox{#1}{\mbox{\ensuremath{\displaystyle #2}}}}
\DeclareRobustCommand\onedot{\futurelet\@let@token\@onedot}
\def\@onedot{\ifx\@let@token.\else.\null\fi\xspace}
\def\eg{\emph{e.g}\onedot,\;} 
\def\ie{\emph{i.e}\onedot,\;} 
\def\etc{\emph{etc}\onedot}
\def\etal{\emph{et al}\onedot}
\journal{Journal of \LaTeX\ Templates}
\begin{document}

\begin{frontmatter}

\title{General Rotation Invariance Learning for Point Clouds via Weight-Feature Alignment}

\author[zjut]{Liang Xie}
\author[jd]{Yibo Yang}
\author[zjusoft]{Wenxiao Wang}
\author[zjusoft]{Binbin Lin}
\author[zju]{Deng Cai}
\author[zju,fabu]{Xiaofei He}
\author[zjut]{Ronghua Liang}

\address[zjut]{Colleague of Computer Science and Technology, Zhejiang University of Technology, Hangzhou, China}
\address[zju]{State Key Lab of CAD \& CG, Zhejiang University, Hangzhou, China}
\address[zjusoft]{School of Software Technology, Zhejiang University, Ningbo, China}
\address[jd]{JD Explore Academy, Beijing, China}
\address[fabu]{Fabu Inc., Hangzhou, China}

\begin{abstract}
Compared to 2D images, 3D point clouds are much more sensitive to rotations. We expect the point features describing certain patterns to keep invariant to the rotation transformation. 
There are many recent SOTA works dedicated to rotation-invariant learning for 3D point clouds. 
However, current rotation-invariant methods lack generalizability on the point clouds in the open scenes due to the reliance on the global distribution, \ie the global scene and backgrounds. 
Considering that the output activation is a function of the pattern and its orientation, we need to eliminate the effect of the orientation. 
In this paper, inspired by the idea that the network weights can be considered a set of points distributed in the same 3D space as the input points, we propose Weight-Feature Alignment (WFA) to construct a local Invariant Reference Frame (IRF) via aligning the features with the principal axes of the network weights. 
Our WFA algorithm provides a general solution for the point clouds of all scenes. 
WFA ensures the model achieves the target that the response activity is a necessary and sufficient condition of the pattern matching degree. 
Practically, we perform experiments on the point clouds of both single objects and open large-range scenes. 
The results suggest that our method almost bridges the gap between rotation invariance learning and normal methods.
\end{abstract}

\begin{keyword}
Point Clouds, Rotation Invariance, Pattern Recognition
\end{keyword}

\end{frontmatter}


\section{Introduction}

\begin{figure}[t]
    \includegraphics[width=1.00\columnwidth]{./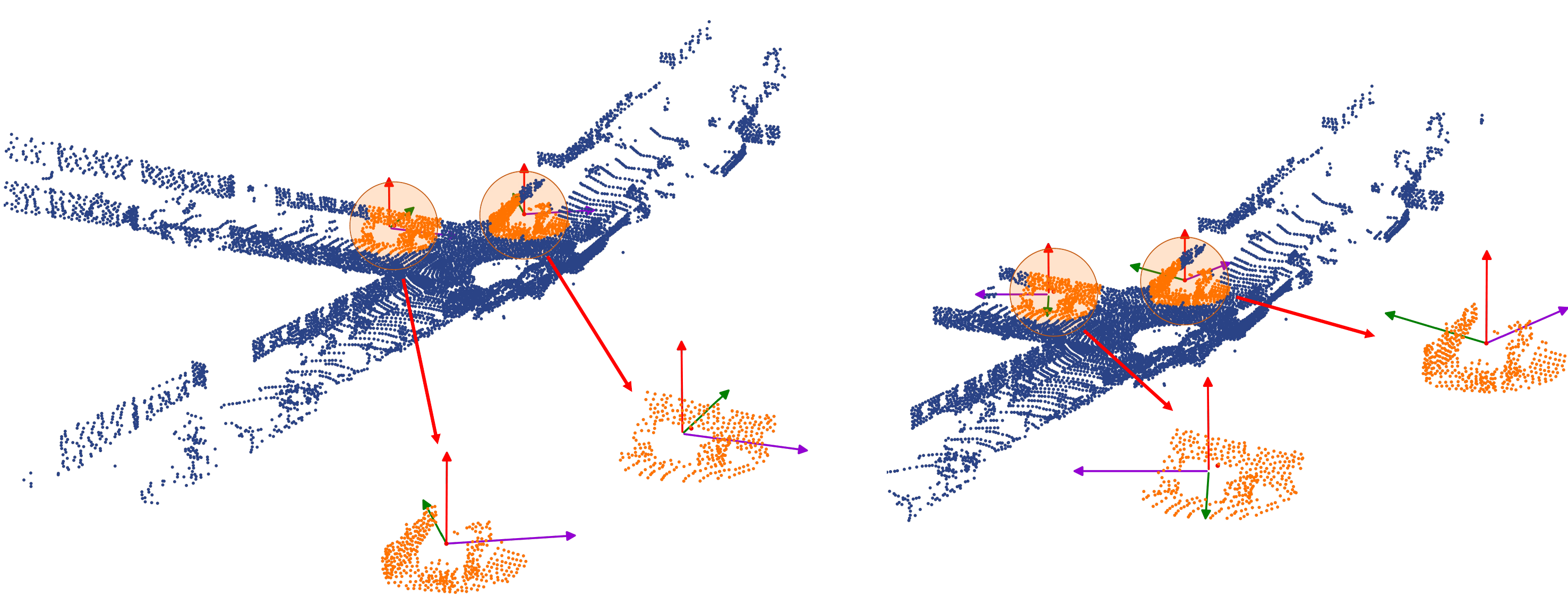}
    \centering
    \caption{\textbf{The Drawback of Relying on the Global Scene.} For most of the previous IRF-based methods, construct IRFs relied on the global distribution of the point clouds, such as the barycenter, \etc. When applying these methods to open scenes, IRFs will vary dramatically as the scene changes. This drawback severely limits the generalizability in the open scenes. }
    \label{fig:drawback_reliance_on_global_distribution}
\end{figure}

In 2D RGB-images, the object semantic information is mainly carried by the pixel colors and patch textures. The features extracted by a 2D convolution kernel are not affected by the pixel coordinates. However, in the point clouds captured by LiDAR, it becomes completely different. The point features are their Cartesian coordinates. Although some LiDAR might obtain an extra intensity value to describe the reflection ratio of the surfaces, the main discriminative features are still the 3D shape patterns.
Moreover, the 3D point positions are continuous in $\mathbb{R}^{3}$ compared to the image pixels' discrete positions in $\mathbb{Z}^{2}$, which means that the rotation in 3D space has more freedom degree than 2D rotations. Therefore, the point features will change dramatically when their coordinates change due to rotation. However, the 3D patterns described by the point clouds keep invariant. Naturally, extracting invariant features to describe invariant patterns becomes an important topic worth researching.

Some previous works \cite{cohen2016group,cohen2018spherical,weiler20183d,cohen2016steerable,shen20203d,worrall2018cubenet,zhao2020quaternion,li2021leveraging,rao2019spherical,poulenard2019effective,liu2018deep} design rotation-equivariant operators. Inspired by the convolution is equivariant to the translation, they perform the equivariant operations via projecting the 3D points in the Cartesian reference frame to the spherical reference frame because the rotation becomes the translation transform in the spherical reference frame. Equivariance can be considered the generalized invariance. However, to eliminate the effect of rotation, the network must employ a symmetric function, \eg pooling operation, to obtain the final invariant features to achieve the supervised training for the classification task. It limits the performance of equivariance learning. Therefore, there is still a performance gap between equivariance learning and invariance learning. Note that many tasks, such as segmentation, object detection,\etc, contain the classifying operation. Therefore, this drawback limits the generalization of these methods.

Some other studies explore the rotation-invariant features, which can be divided into two groups: RIF-based and IRF-based methods. RIF-based methods \cite{yu2020deep,zhang2019rotation,zhang2022riconv++,deng2018ppf,sun2019srinet,li2021rotation,xiao2021triangle,zhao2022rotation,chen2019clusternet} attempt to build RIF (Rotation-Invariant Features) via the point relative relationships information. However, their designed features are distributed in a non-orthogonal feature space, which brings mathematical redundancy. Moreover, their designs are theoretically interpretable due to that they cannot ensure an optimal feature space. IRF-based methods \cite{spezialetti2020learning,fang2020rotpredictor,xiao2020endowing,li2021closer,kim2020rotation,zhao2022rotation,zhang2020learning} dedicate to build a local or global rotation-invariant reference frame via PCA (Principle Component Analysis) or other algorithms. However, most of their constructed IRFs highly rely on the global point distribution, \eg IRFs in \cite{zhang2020learning} the global barycenter and IRFs in \cite{zhao2022rotation} the global principal components. As illustrated in Figure \ref{fig:drawback_reliance_on_global_distribution}, although the features are invariant to the rotation transformation, they are not invariant to the global scene, \ie the features are affected by not only the local patterns but also the surroundings and backgrounds. It causes that they can be applied to the point clouds of small scenes, \eg the points clouds describing a single object or scanned from small rooms, and is not applicable for open outdoor scenes. This drawback severely limits their practical applicability. Although the PCA only based on local region \cite{kim2020rotation} can solve this problem, they still face the pose ambiguity problem caused by the uncertainty of the orders and orientations of the principal axes. Some works \cite{xiao2020endowing,li2021closer,yu2020deep,spezialetti2020learning,fang2020rotpredictor} try to solve this issue via learning a fixed orientation or combining all possibilities. However, the learned orientations or combining weights are not interpretable. For similar patterns, the network might do completely different estimations.

To overcome the above drawbacks, we rethink the fundamental reason for the sensitivity to rotation. The feature extraction in the local region can be considered through the following procedure:

\begin{equation}
	\label{equ:activation_unexpected}
	\mathrm{activation} = f\left(\mathrm{pattern}, \; \mathrm{orientation}, \; \mathrm{background}\right)
\end{equation}

\noindent It means that the output activation value is a function of the local pattern and its orientation. The parameter \textit{background} indicates that some rotation-invariant methods are affected by the background to achieve rotation invariance, which brings another unexpected noisy variance. Ignoring the \textit{background} parameter, the kernel function $f$ will give an active response if and only if the pattern and orientation are matched to the detected target of the kernel. In this case, as Figure (\ref{fig:pattern_orientation_alignment}) shows, an inactive response does not mean the pattern is not expected because the inactivity might be caused by the misaligned orientations between the kernel and input. Therefore, we expect the activation is a function only based on a local pattern, \ie

\begin{equation}
	\label{equ:activation_expected}
	\mathrm{activation} = f\left(\mathrm{pattern}\right)
\end{equation}

\noindent So that the response activity can be thought of as a necessary and sufficient condition of the pattern matching degree.

In this paper, inspired by the idea that the weights of the network can be considered a set of points and the feature extraction can be considered pattern matching, we propose to align the points and weights. Our Weight-Feature Alignment (WFA) algorithm transforms the points to the reference frame decided by the principal axes of the network weights. Different from the previous rotation-invariant works limited in the applicable scenes due to the high reliance on the global distribution of the point clouds, our WFA algorithm can be considered a general operator applicable in the point clouds of arbitrary scenes, \eg the open and large-range outdoor scenes. The experiment results show that our method almost bridges the gap between rotation invariance learning and normal methods\footnote{methods that are designed universally and do not consider rotation invariance, \eg PointNet++, evaluating on the unrotated data}.

The contributions of this paper can be summarized as:

\begin{itemize}
	\item We propose WFA (Weight-Feature Alignment) to align the point features and network weights. Our approach provides a general solution for robust invariant feature learning and can be applied on the points in all scenes.
	\item We perform a theoretical analysis to provide a mathematical explanation for the effectiveness, which makes our method more interpretable. 
	\item We perform experiments on the point clouds of variance scenes, including the single-object and open scenes to demonstrate the generalness. The results suggest that our method almost bridges the performance cap between rotation invariance learning and normal methods.
\end{itemize}

\section{Related Works}\label{sec:related_works}

\subsection{Mainstream 3D Vision Architecture}

In the field of 3D vision, the mainstream architectures \cite{qi2016pointnet,qi2017pointnetplusplus,wang2019dynamic,wu2018pointconv,thomas2019KPConv,li2018so,wang2018deep,li2018pointcnn,atzmon2018point} are similar to the CNNs (Convolution Neural Networks) widely applied in 2D computer vision. They extract the point features via aggregating the features of the points in the neighboring region. The feature aggregation can be interpreted as the practical implementation of the mathematical continuous convolution kernel via MLP, theoretically supported by the Universal Approximation Theorem \cite{gybenko1989approximation,csaji2001approximation}. The range of the neighboring region corresponds to the receptive field of the 2D CNN's convolutional kernel. Then, these frameworks extract features from different fields, from local to global, by employing similar hierarchical architectures to CNN. The recent rotation invariance learning of point clouds adopts a similar idea. The network extracts local rotation-invariant features and hierarchically aggregates them layer by layer. Therefore, the current mainstream idea of rotation invariance learning is to discover a local invariant operator.

\subsection{Rotation Equivariance Learning}

Equivariance means that the features extraction function and the transformation are exchangeable. Rotation-equivariant works \cite{cohen2016group,esteves2018learning,cohen2018spherical,weiler20183d,cohen2016steerable,shen20203d,worrall2018cubenet,zhao2020quaternion,li2021leveraging,rao2019spherical,poulenard2019effective,liu2018deep} design rotation equivariant operators to achieve rotation robustness. Their main idea is to project the 3D points in the Cartesian reference frame to the spherical coordinate system where the rotation transformation becomes translation. Because the convolution is equivariant to the translation transform, the feature extraction on the spherical reference frame is also equivariant to rotation. Because many machine learning tasks contain the classifying operation, the network must employ symmetric functions, such as the commonly used pooling operation, to eliminate the effect of rotation in the final classification results. However, it severely limits the performance, which causes the current performance gap between Equivariance learning and invariance learning.

\subsection{Rotation Invariance Learning}

The rotation invariance learning can be divided into two groups. One is to design RIF (Rotation-Invariant Features), and the other is to design IRF (Invariant Reference Frame).

\subsubsection{RIF-based Methods.} RIF-based methods \cite{yu2020deep,zhang2019rotation,zhang2022riconv++,deng2018ppf,sun2019srinet,li2021rotation,xiao2021triangle,zhao2022rotation,chen2019clusternet} build the low-level RIF to achieve rotation invariance. They design RIF via encoding the relationships of the neighboring points, such as the relative distances and angles between the query point and its neighboring points. However, their high-dimensional representations are distributed in a non-orthogonal space, which brings mathematical redundancy and theoretical uninterpretability.

\subsubsection{IRF-based Methods.} IRF-based Methods represent the points in a constructed IRF. Some works \cite{xiao2020endowing,li2021closer,zhao2022rotation} build the global IRFs based on the PCA (Principle Component Analysis). However, global-IRF-based methods are not applicable for the point clouds of open scenes due to that the invariance will be destroyed by the effect of the global scene, \ie the features are affected by not only the local patterns but also the surroundings and backgrounds. Meanwhile, some works build local IRFs for each query point but still rely on the global point distribution, \eg IRF in \cite{zhang2020learning} relies on the global barycenter, and IRFs in \cite{zhao2022rotation} rely on the global principal components. These methods can only be applied to the small-scene point clouds, \eg the single-object points clouds, and is not applicable for open scenes, \eg the points clouds captured from the outdoor street scenes. This drawback severely limits their practical applicability. Although \cite{kim2020rotation} performs PCA to build local IRFs without the reliance on the points out of the local region, it still faces the IRF ambiguity problem caused by the order and orientation uncertainty of the principal axes. Some works attempt to solve this problem via learning a fixed orientation \cite{yu2020deep,spezialetti2020learning,fang2020rotpredictor} or combining all possibilities \cite{li2021closer,xiao2020endowing} through an attention-like mechanism. However, the learned orientations or combining weights are not interpretable. For similar patterns, the network might do completely different estimations.

\begin{figure*}[t]
    \includegraphics[width=0.86\textwidth]{./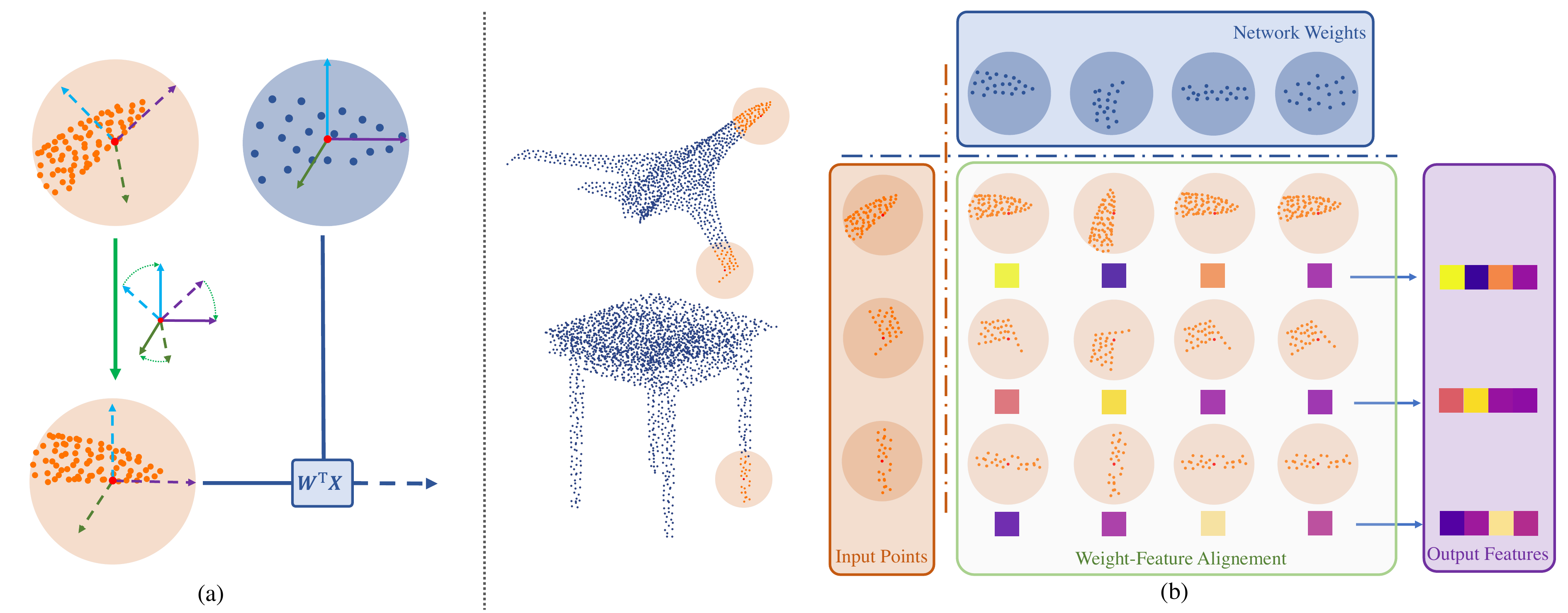}
    \centering
    \caption{\textbf{(a): Weight-Feature Alignment.} The local IRF (Invariant Reference Frame) is aligned with the principal axes of the network weights. \textbf{(b): The Motivation and Inspiration.} Feature extraction can be considered pattern matching. The network weights acting like the kernel of 2D CNN are responsible for certain patterns and will output active activation if they are matched with the patterns of the input points.}
    \label{fig:WFA}
\end{figure*}

\begin{figure}[t]
    \includegraphics[width=1.00\columnwidth]{./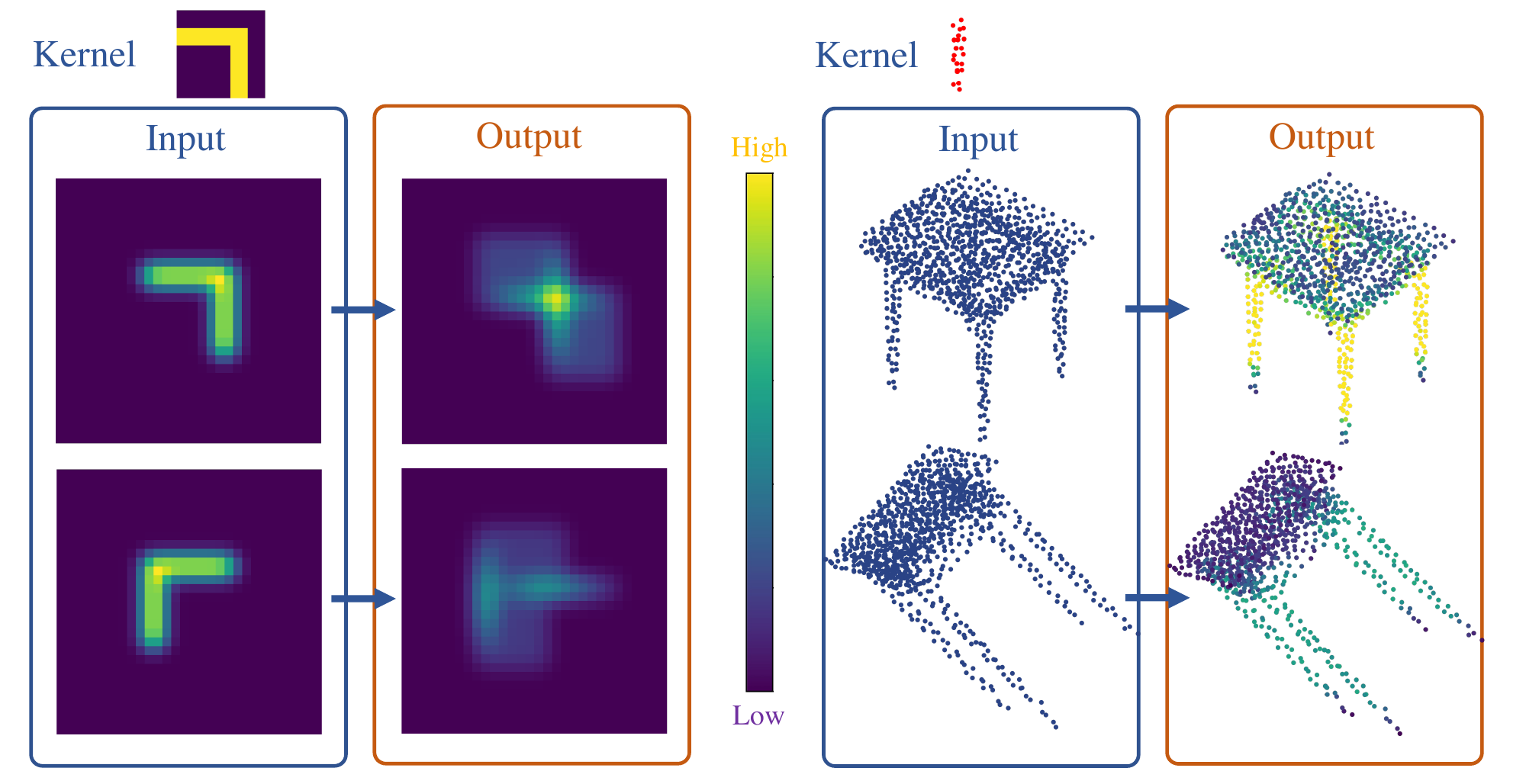}
    \centering
    \caption{\textbf{The perspective of Pattern Matching.} The extracted features can be considered responses about how much the patterns of the kernel and input are matched. For a kernel describing a certain pattern, the output returns an active response only when the orientation of the input pattern is as same as the kernel. Otherwise, the response will be low, which makes the network hard to recognize the patterns of orientations unseen in the training phase. }
    \label{fig:pattern_orientation_alignment}
\end{figure}

\section{Method}

\begin{figure*}[t]
    \includegraphics[width=0.96\textwidth]{./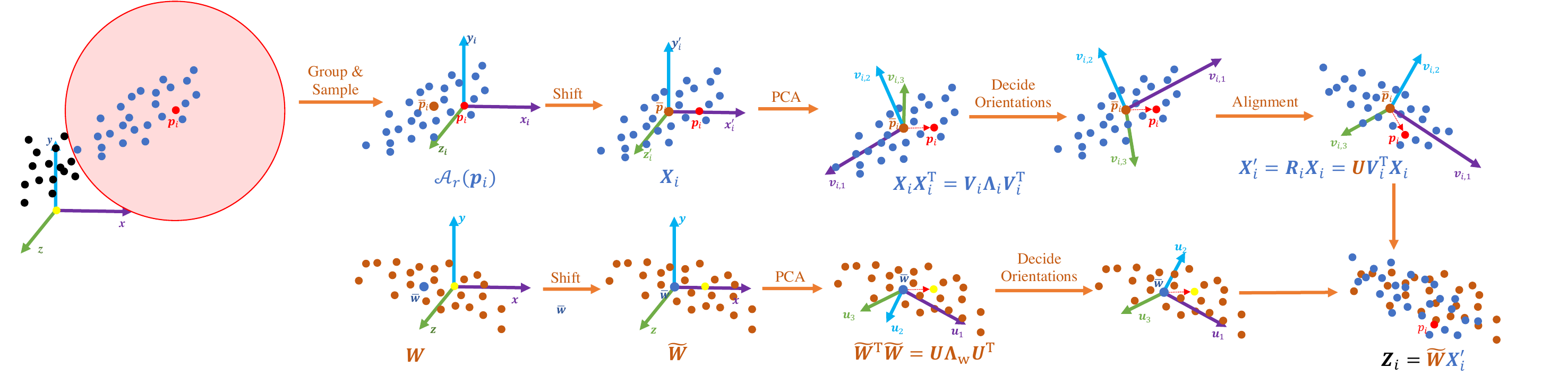}
    \centering
    \caption{\textbf{Weight-Feature Alignment.} (1) PCA are performed on the input points and network weights respectively to obtain the IRF (invariant reference frames) decided by $\boldsymbol{U}$ and $\boldsymbol{V}_{i}$. (2) The orientations of the IRF's axes are decided uniformly. Especially, the orientations of $\boldsymbol{V}_{i}$ and $\boldsymbol{U}$ are chosen so that the vectors $(\boldsymbol{p}_{i} - \bar{\boldsymbol{p}}_{i})$ and $-\bar{\boldsymbol{w}}$ locate at the first quadrant. (3) Points are aligned via the transformation $\boldsymbol{X}'_{i} = \boldsymbol{R}_{i}\boldsymbol{X}_{i}, \; \boldsymbol{R}_{i} = \boldsymbol{U}\boldsymbol{V}_{i}^{\mathrm{T}}$. }
    \label{fig:WFA_details}
\end{figure*}

\subsection{Preliminary \& Notations}

The point clouds can be represented as a disordered set of 3D points $\mathcal{P} = \{ \boldsymbol{p}_{i} \, \vert \, \boldsymbol{p}_{i} \in \mathbb{R}^{3 \times 1}, \; i = 1, 2, \dots, n\}$, and we use bold letter to presents the matrix format: $\boldsymbol{P} = {\left[ \boldsymbol{p}_{1}, \,  \boldsymbol{p}_{2}, \, \dots, \, \boldsymbol{p}_{n} \right]} \in \mathbb{R}^{3 \times c}$, where $\boldsymbol{p}_{i}$ is the column vector of $xyz$-Cartesian coordinates of $i$-th point. 
We use $\mathcal{A}_{r}(\boldsymbol{p}_{i})$ to denote the set of the point indices in $\boldsymbol{p}_{i}$'s neighbor area with radius of $r$, i.e., the neighbor points of $\boldsymbol{p}_{i}$ can be represented as: $\{ \boldsymbol{p}_{j} \, \vert \, j \in \mathcal{A}_{r}(\boldsymbol{p}_{i}) \}$. 
The barycenter of the neighboring points of point $\boldsymbol{p}_{i}$ is denoted as $\bar{\boldsymbol{p}}_{i} = \frac{1}{n_{r}}\sum_{j \in \mathcal{A}_{r}(\boldsymbol{p}_{i})} \boldsymbol{p}_{j}$.

\subsection{Motivation \& Inspiration.} 

\textbf{The Perspective of Feature Extraction.} The feature extraction in a local region can be abstracted as:

\begin{equation}
	\label{equ:activation_unexpected_abstracted}
	\boldsymbol{R} = f\left(\boldsymbol{L}(\mathcal{P}, \, \mathcal{O}), \; \boldsymbol{G}(\mathcal{P}, \, \mathcal{O})\right)
\end{equation}

\noindent where $\boldsymbol{R}$ represents the response, \ie the comprehensive representations of point clouds, $\boldsymbol{L}$ and $\boldsymbol{G}$ represents the patterns described by the points in the local and global regions, respectively. 
$\boldsymbol{L}$ and $\boldsymbol{G}$ are both the function of the spatial orientation of point clouds $\mathcal{O}$ and the \footnote{The intrinsic patterns refer to the patterns that are described by object's shape and remain unaffected by conformal geometric transformations, \eg rotation.}{\textit{intrinsic}} patterns $\mathcal{P}$. 
The feature extraction function $f$ integrates both local and global features to obtain the representation of the point clouds. 
Naturally, the representation is subject to spatial orientation, \ie rotations can affect the representations of point clouds describing the same object.

The target of rotation invariance learning is to eliminate the effect of the spatial orientation $\mathcal{O}$. 
So that the model can avoid fitting the patterns under all possible orientations via the data augmentation and eliminate the performance gap due to the orientations unseen in the training phase. 
And considering that the mainstream models extract the features of small scale in the primary layers and aggregate them from a larger scale in the subsequent layers, the feature in the deep layer naturally contains the global information. 
Therefore, in the primary layers of the model, we expect that the features are not affected by the global features $\boldsymbol{G}$. 
The final goal of rotation invariance learning can be written as:

\begin{equation}
	\label{equ:activation_expected_abstracted}
	\mathcal{R} = f\left(\boldsymbol{L}(\mathcal{P})\right),
\end{equation}

\noindent \ie the response $\boldsymbol{R}$ is the function only of local pattern $\boldsymbol{L}$. 

As discussed in the Section \textit{Related Works}, most recent IRF-based works \cite{xiao2020endowing,yu2020deep,spezialetti2020learning,li2021closer,fang2020rotpredictor} have a severe shortcoming: 
their IRFs rely on the global distribution of the point clouds, so their rotation-invariant features are inevitably affected by the backgrounds, unexpectedly. 
The features for the one certain pattern in these IRFs are still not unique. 
For the single-object point clouds or the point set describing small regions, the effect of this drawback is not very significant. 
But it severely limits the applicability on the open large-range scenes. 
Therefore, the solution to this problem is to \textbf{overcome the reliance on the information out of local region}.

\textbf{The Perspective of Pattern Recognition.} 
If we use $f_{\boldsymbol{F}}$ to represent the kernel function that detects the pattern $\boldsymbol{F}$, the response of the function $f$ can be written as:

\begin{equation}
	\label{equ:f_L_c}
	f_{\boldsymbol{F}}(\boldsymbol{L}) = \begin{cases}
		\boldsymbol{R}_{a}, & \text{if} \; \boldsymbol{L} \; \text{is matched with} \; \boldsymbol{F} \\
		\boldsymbol{R}_{i}, & \text{if} \; \boldsymbol{L} \; \text{is not matched with} \; \boldsymbol{F}
	\end{cases},
\end{equation}

\noindent where $\boldsymbol{R}_{a}$ and $\boldsymbol{R}_{i}$ represent the active and inactive response respectively.
However, when considering the orientation $\mathcal{O}$, as shown in Figure \ref{fig:pattern_orientation_alignment}, it will lead to the ambiguous logical inference, which can be abstracted as:

\begin{equation}
	\label{equ:unexpected_pattern_inference}
	\begin{aligned}
			\boldsymbol{R}_{a} \; & \Rightarrow \; ``\text{if} \; \boldsymbol{L} \; \text{is matched with} \; \boldsymbol{F}" \\
			\boldsymbol{R}_{i} \; & \Rightarrow \; ``\text{if} \; \boldsymbol{L} \; \text{is not matched with} \; \boldsymbol{F}" \; \mathrm{or} \; ``\text{if} \; \boldsymbol{L} \; \text{is not matched with} \; \boldsymbol{F}"
	\end{aligned}
\end{equation}

\noindent \ie we can not assert whether the input pattern $\boldsymbol{L}$ is matched to the pattern recognized by the kernel $f_{\boldsymbol{F}}$. 
We expected that the pattern matching degree could be exactly inferred from the activity of the response, \ie

\begin{equation}
	\label{equ:expected_pattern_inference}
	\begin{aligned}
			\boldsymbol{R}_{a} \; & \Rightarrow \; ``\text{if} \; \boldsymbol{L} \; \text{is matched with} \; \boldsymbol{F}" \\
			\boldsymbol{R}_{i} \; & \Rightarrow \; ``\text{if} \; \boldsymbol{L} \; \text{is not matched with} \; \boldsymbol{F}"
	\end{aligned}
\end{equation}

\noindent Therefore, we propose to \textbf{align the patterns of input and kernel} to realize the Equation (\ref{equ:expected_pattern_inference}).

Currently, there are two primary paradigms for extracting 3D point features. 
One is employing MLPs to implement continuous convolution or using Transformer\cite{vaswani2017attention} to directly process the raw 3D points \cite{qi2016pointnet,qi2017pointnetplusplus,wang2019dynamic,wu2018pointconv,thomas2019KPConv,li2018so,wang2018deep,li2018pointcnn,atzmon2018point,zhao2021point,engel2021point,park2022fast,guo2021pct}. 
The other is transforming the 3D points into voxels and then using CNNs or Vision Transformers for further processing \cite{graham2017submanifold,zhou2018voxelnet,mao2021voxel,he2022voxel,zhang2022pvt}.
Due to the voxelization process, which discretizes the 3D space into cubes, and the fact that 3D rotations occur in the continuous Lie-group space, theoretically, achieving invariance to rotations in three-dimensional space can be challenging.
Therefore, in this paper, we choose to directly process the raw 3D points using the classical PointNet++\cite{qi2017pointnetplusplus} architecture.

\subsection{Weight-Feature Alignment}

The weights of the linear layer in the neural network can be considered a set of points distributed in the same space as the input points:

\begin{equation}
	\boldsymbol{W} = \left[ \boldsymbol{w}_{1},\; \boldsymbol{w}_{2},\; \cdots,\; \boldsymbol{w}_{d} \right] \in \mathbb{R}^{c_{\mathrm{in}} \times d},
\end{equation}

\noindent where $\boldsymbol{w}_{i} \in \mathbb{R}^{c_{\mathrm{in}}}$ is the $i$-th row vector of the weights $\boldsymbol{W}$, $c_{\mathrm{in}}$ is the number of the channel of the input features, usually $c_{\mathrm{in}}$ is $3$. 
Supposed that the input points are denoted as $\boldsymbol{X} \in \mathbb{R}^{c_{\mathrm{in}} \times n}$, the feature extraction, \ie the matrix multiplication $\boldsymbol{Z} = \boldsymbol{W}^{\mathrm{T}}\boldsymbol{X}$ can be considered pattern matching. 

For a query point $\boldsymbol{p}_{i}$, we first perform PCA on its neighboring points to obtain the principal axes as the following: 

\begin{equation}
	\label{equ:pca_x}
	\boldsymbol{X}_{i}\boldsymbol{X}_{i}^{\mathrm{T}}\boldsymbol{V}_{i} = \boldsymbol{V}_{i}\boldsymbol{\Lambda}_{i}, \qquad i = 1, 2, 3,
\end{equation}

\noindent where $\boldsymbol{X}_{i} = \left[(\boldsymbol{p}_{j} - \bar{\boldsymbol{p}}_{i})\right]_{j \in \mathcal{A}_{r}(\boldsymbol{p}_{i})} \in \mathbb{R}^{3 \times n_{r}}$ is the relative coordinates of $n_{r}$ neighboring points, $\boldsymbol{V}_{i} = [ \boldsymbol{v}_{i,1},\; \boldsymbol{v}_{i,2},\; \boldsymbol{v}_{i,3} ]$ is the matrix composed of three unit principal components and $\boldsymbol{\Lambda}_{i} = \operatorname{diag}(\lambda_{i,1},\; \lambda_{i,2},\; \lambda_{i,3})$ is the diagonal matrix whose diagonal corresponds to the descent order of the eigenvalues $\lambda_{i,1} > \lambda_{i,2} > \lambda_{i,3}$.

Similarly, we perform PCA on the network weights:

\begin{equation}
	\label{equ:pca_w}
	\widetilde{\boldsymbol{W}}\widetilde{\boldsymbol{W}}^{\mathrm{T}}\boldsymbol{U} = \boldsymbol{U}\boldsymbol{\Lambda}_{\mathbf{w}},
\end{equation}

\noindent where $\widetilde{\boldsymbol{W}} = {\left[ \boldsymbol{w}_{1} - \bar{\boldsymbol{w}},\; \boldsymbol{w}_{2} - \bar{\boldsymbol{w}},\; \cdots,\; \boldsymbol{w}_{d} - \bar{\boldsymbol{w}} \right]} \in \mathbb{R}^{3 \times d}$, $\bar{\boldsymbol{w}} = \frac{1}{d}\sum_{k=1}^{d}\boldsymbol{w}_{k}$ is the barycenter of the learnable weights of the network, $\boldsymbol{U} = [ \boldsymbol{u}_{1},\; \boldsymbol{u}_{2},\; \boldsymbol{u}_{3}]$ and $\boldsymbol{\Lambda}_{\mathbf{w}} = \operatorname{diag}(\lambda_{w,1},\; \lambda_{w,2},\; \lambda_{w,3})$ are defined similarly with the PCA on query points.

However, there will be $8$ possibilities for the orientations of the principal axes, and this uncertainty will destroy the rotation invariance. Therefore we must make a rule to decide the orientations. 
As Figure \ref{fig:WFA_details} shows, the orientation of $\boldsymbol{V}_{i}$ is chosen to make the vector $(\boldsymbol{p}_{i} - \bar{\boldsymbol{p}}_{i})$ to be at the first quadrant. 
The orientation of $\boldsymbol{U}$ is chosen to make the vector $-\bar{\boldsymbol{w}}$ to be at the first quadrant.

The results of PCA on points and weights give two groups of new 3D basic vectors, respectively. 
Then, we align the axes of these two Cartesian reference frames, like the illustration in Figure \ref{fig:WFA}. 
Mathematically, the aligning operation is equivariant to the 3D rotation, which can be represented by:

\begin{equation}
	\label{equ:R_i}
	\boldsymbol{T}_{i} = \boldsymbol{U}\boldsymbol{V}_{i}^{\mathrm{T}}
\end{equation}

\noindent Then, the aligned points $\tilde{\boldsymbol{p}}_{j}, j \in \mathcal{A}_{r}(\boldsymbol{p}_{i})$ can be obtained by:

\begin{equation}
	\label{equ:p'}
	\scalemath{0.9}{
		\begin{aligned}
			\tilde{\boldsymbol{p}}_{j} & = \boldsymbol{T}_{i}\left(\boldsymbol{p}_{j} - \bar{\boldsymbol{p}}_{i}\right) \\ 
				& = \boldsymbol{v}_{i,1}^{\mathrm{T}}\left(\boldsymbol{p}_{j} - \bar{\boldsymbol{p}}_{i}\right)\boldsymbol{u}_{1} + \boldsymbol{v}_{i,2}^{\mathrm{T}}\left(\boldsymbol{p}_{j} - \bar{\boldsymbol{p}}_{i}\right)\boldsymbol{u}_{2} + \boldsymbol{v}_{i,3}^{\mathrm{T}}\left(\boldsymbol{p}_{j} - \bar{\boldsymbol{p}}_{i}\right)\boldsymbol{u}_{3},
		\end{aligned}
	}
\end{equation}

\noindent Then, we extract the features of the query point $\boldsymbol{p}_{i}$ by:

\begin{equation}
	\label{equ:mlp_model}
	\boldsymbol{y}_{i} = \widetilde{\boldsymbol{W}}^{\mathrm{T}}\boldsymbol{X}_{i}' + \boldsymbol{b}
\end{equation}

\noindent where $\boldsymbol{X}'_{i} = [ \tilde{\boldsymbol{p}}_{j_{1}}, \, \tilde{\boldsymbol{p}}_{j_{2}}, \, \dots, \, \tilde{\boldsymbol{p}}_{j_{n_{r}}} ] \in \mathbb{R}^{3 \times n_{r}}, \; j_{1}, \dots,j_{n_{r}} \in \mathcal{A}_{r}(\boldsymbol{p}_{i})$, $\boldsymbol{W}, \, \boldsymbol{b},$ are the learnable weights and bias. 
In the classical PointNet++-like architectures, we perform WFA in every sampling-and-grouping operation, \ie the set-abstraction module mentioned in \cite{qi2017pointnetplusplus}.

No matter how the points rotate, the local reference frame represented by the orthogonal matrix $\boldsymbol{T}_{i}$ will always be aligned with the principal axes of the network weights. 
For the rotation invariance proof of our WFA methods, please refer to the Appendix.

\subsection{Theoretical Results}

In our WFA method, the forward of the linear layers (ignoring the bias) can be represented as:

\begin{equation}
	\label{equ:first_wx}
	\boldsymbol{Z}_{i} = \widetilde{\boldsymbol{W}}^{\mathrm{T}}\boldsymbol{X}_{i}'
\end{equation}

\noindent where the symbols have the same meaning as the previous subsection. 
Our target is to construct a local IRF, \ie a linear transformation represented by an orthogonal matrix $\boldsymbol{T}_{i} \in \mathbb{R}^{3 \times 3}$, such that $\boldsymbol{X}'_{i} = \boldsymbol{T}_{i}\boldsymbol{X}_{i}$. 
As discussed above, to avoid the ambiguous inference described by Equation (\ref{equ:unexpected_pattern_inference}) that the response is not active when the patterns are matched, but the orientations are not aligned, we expect the network weights and input points are as aligned as possible. 
It can also be understood that we expect to minimize the \textit{distance} between the input points and weights. 
This goal is as same as the optimization target of the point clouds registration task.

\begin{theorem}
	\label{theorem:PCR}
	For each query point $\boldsymbol{p}_{i}$, $\boldsymbol{T}_{i}$ in Equation (\ref{equ:R_i}) is the solution of the following optimization problem:
	\begin{equation}
		\underset{\boldsymbol{T}}{\arg\min}\;\sum_{k=1}^{n}{\left\Vert \tilde{\boldsymbol{w}}_{\pi(k)} - \boldsymbol{T}\boldsymbol{x}_{k} \right\Vert}_{2}^{2},
	\end{equation}
	\noindent where $\tilde{\boldsymbol{w}}_{\pi(k)} = \boldsymbol{w}_{\pi(k)} - \bar{\boldsymbol{w}}$ and $\boldsymbol{x}_{j} = \boldsymbol{p}_{j} - \bar{\boldsymbol{p}}_{i}, \; j \in \mathcal{A}_{r}(\boldsymbol{p}_{i})$. The subscript $\pi(k)$ represents that $\tilde{\boldsymbol{w}}_{\pi(k)}$ is the closest (measured by Euclidian distance) point to the point $\boldsymbol{T}\boldsymbol{x}_{k}$, \ie,
	\begin{equation}
		\pi(k) = \underset{\tilde{\boldsymbol{w}}_{i} \in \boldsymbol{W}}{\arg\min}\; {\left\Vert \tilde{\boldsymbol{w}}_{i} -  \boldsymbol{T}\boldsymbol{x}_{k} \right\Vert}_{2}^{2}
	\end{equation}
	\noindent where $\tilde{\boldsymbol{w}}_{i} \in \boldsymbol{W}$ represents that $\tilde{\boldsymbol{w}}_{i}$ is one of $\boldsymbol{W}$'s column vectors.
\end{theorem}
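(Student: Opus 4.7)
The plan is to recast the problem as an orthogonal Procrustes / Kabsch alignment with an unknown correspondence, and then show that the PCA-based choice $\boldsymbol{R}_i = \boldsymbol{U}\boldsymbol{V}_i^{\mathrm{T}}$ is what the joint optimum collapses to under the paper's orientation convention. First, I would expand the squared norms so that
\[
\sum_{k=1}^{n}\bigl\|\tilde{\boldsymbol{w}}_{\pi(k)} - \boldsymbol{R}\boldsymbol{x}_k\bigr\|_2^2
= \sum_k \|\tilde{\boldsymbol{w}}_{\pi(k)}\|^2 + \sum_k\|\boldsymbol{x}_k\|^2 - 2\operatorname{tr}\!\left(\boldsymbol{R}\,\boldsymbol{C}_{\pi}\right),
\]
where $\boldsymbol{C}_{\pi} = \sum_k \boldsymbol{x}_k \tilde{\boldsymbol{w}}_{\pi(k)}^{\mathrm{T}}$. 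The first two sums do not couple $\boldsymbol{R}$ with $\pi$, so the joint minimization reduces to maximizing $\operatorname{tr}(\boldsymbol{R}\,\boldsymbol{C}_{\pi})$ over $\boldsymbol{R}\in\mathrm{SO}(3)$ and the nearest-neighbor assignment $\pi$.

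Next, for any \emph{fixed} $\pi$, I would invoke the standard Kabsch/Procrustes result: if $\boldsymbol{C}_{\pi} = \boldsymbol{A}\,\boldsymbol{\Sigma}\,\boldsymbol{B}^{\mathrm{T}}$ is the SVD, then $\operatorname{tr}(\boldsymbol{R}\boldsymbol{C}_{\pi})\leq \operatorname{tr}\boldsymbol{\Sigma}$, with equality at $\boldsymbol{R}=\boldsymbol{B}\boldsymbol{A}^{\mathrm{T}}$. Hence the outer optimization reduces to choosing $\pi$ so as to maximize $\operatorname{tr}\boldsymbol{\Sigma}(\boldsymbol{C}_{\pi})$, i.e.\ the nuclear norm of the cross-covariance. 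I would then argue that since the nearest-neighbor rule under $\boldsymbol{R}=\boldsymbol{U}\boldsymbol{V}_i^{\mathrm{T}}$ couples each rotated point to the weight whose principal-axis coordinates are closest, the resulting $\boldsymbol{C}_{\pi}$ is diagonalized in the basis $(\boldsymbol{U},\boldsymbol{V}_i)$, so that $\boldsymbol{A}=\boldsymbol{U}$ and $\boldsymbol{B}=\boldsymbol{V}_i$ up to sign. The first-quadrant orientation convention imposed on $\boldsymbol{U}$ and $\boldsymbol{V}_i$ then removes the $2^3$-fold sign ambiguity and singles out $\boldsymbol{R}_i=\boldsymbol{U}\boldsymbol{V}_i^{\mathrm{T}}$.

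The main obstacle is exactly the coupling between $\boldsymbol{R}$ and the correspondence $\pi$: this is precisely the ICP objective, which in general admits local optima and is not globally minimized by PCA alignment alone. To close the argument I would need either (i) an optimality identification through a bound of the form $\operatorname{tr}\boldsymbol{\Sigma}(\boldsymbol{C}_{\pi})\leq \sum_{j}\sqrt{\lambda_{i,j}\lambda_{w,j}}$ that is attained simultaneously by the nearest-neighbor assignment and the axis-aligned rotation, or (ii) a structural assumption that the weights and the local point set have the same axis ordering, so that Procrustes collapses to matching eigenbases. Verifying that the nearest-neighbor $\pi$ at $\boldsymbol{R}=\boldsymbol{U}\boldsymbol{V}_i^{\mathrm{T}}$ really produces a cross-covariance whose SVD factors are $\boldsymbol{U}$ and $\boldsymbol{V}_i$ (rather than some permuted/reflected pair) is the technical heart of the argument and is where I would spend most of the effort; the sign convention from Figure~\ref{fig:WFA_details} is the clean finishing step.
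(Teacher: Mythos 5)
Your reduction is exactly the first half of the paper's proof: expand the square, drop the terms independent of $\boldsymbol{R}$, reduce to maximizing $\operatorname{tr}(\boldsymbol{R}\boldsymbol{C}_{\pi})$, and invoke Kabsch/Procrustes on the SVD of the cross-covariance (the paper's $\boldsymbol{H}=\boldsymbol{X}_{\pi}\widetilde{\boldsymbol{W}}^{\mathrm{T}}$, with optimum $\boldsymbol{R}^{*}=\boldsymbol{V}_{\mathbf{H}}\boldsymbol{U}_{\mathbf{H}}^{\mathrm{T}}$). Up to that point you match the paper step for step. The genuine gap is the part you yourself flag as the "technical heart" and leave open: identifying the SVD factors of the cross-covariance with the PCA bases $\boldsymbol{V}_{i}$ and $\boldsymbol{U}$ of Equations (\ref{equ:pca_x}) and (\ref{equ:pca_w}). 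Without that identification you have only shown that \emph{some} rotation of Kabsch form solves the fixed-correspondence Procrustes subproblem, not that it is $\boldsymbol{U}\boldsymbol{V}_{i}^{\mathrm{T}}$ of Equation (\ref{equ:R_i}); so as written the proposal does not yet prove the theorem. Your suggested routes (a nuclear-norm bound $\operatorname{tr}\boldsymbol{\Sigma}(\boldsymbol{C}_{\pi})\le\sum_{j}\sqrt{\lambda_{i,j}\lambda_{w,j}}$ attained at the axis-aligned rotation, or a structural assumption on axis ordering) are plausible but are not carried out, and the claim that the nearest-neighbor $\pi$ at $\boldsymbol{R}=\boldsymbol{U}\boldsymbol{V}_{i}^{\mathrm{T}}$ "diagonalizes" $\boldsymbol{C}_{\pi}$ in the basis $(\boldsymbol{U},\boldsymbol{V}_{i})$ is asserted, not derived.

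For comparison, the paper closes this step differently: it writes the separate SVDs $\boldsymbol{X}_{\pi}=\boldsymbol{U}_{\pi}\boldsymbol{\Sigma}'_{\pi}\boldsymbol{V}_{\pi}^{\mathrm{T}}$ and $\widetilde{\boldsymbol{W}}=\boldsymbol{U}_{\mathbf{w}}\boldsymbol{\Sigma}'_{\mathbf{w}}\boldsymbol{V}_{\mathbf{w}}^{\mathrm{T}}$, diagonalizes $\boldsymbol{H}\boldsymbol{H}^{\mathrm{T}}$ and $\boldsymbol{H}^{\mathrm{T}}\boldsymbol{H}$ to conclude $\boldsymbol{U}_{\mathbf{H}}=\boldsymbol{V}_{\pi}$ and $\boldsymbol{V}_{\mathbf{H}}=\boldsymbol{U}_{\mathbf{w}}$, and then bridges from the correspondence-reordered set $\boldsymbol{X}_{\pi}$ back to the original neighborhood $\boldsymbol{X}_{i}$ by an explicit statistical assumption: the reordered points and the neighborhood are treated as i.i.d.\ samples of the same distribution with $n_{r}$ large, so their covariance eigenvectors coincide and $\boldsymbol{V}_{\pi}$ may be replaced by $\boldsymbol{V}_{i}$. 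In other words, the paper does not resolve the $\boldsymbol{R}$--$\pi$ coupling you correctly worry about (ICP local optima); it assumes it away distributionally, and its intermediate claim that $\boldsymbol{\Sigma}'_{\pi}\boldsymbol{V}_{\pi}^{\mathrm{T}}\boldsymbol{V}_{\mathbf{w}}\boldsymbol{\Sigma}'_{\mathbf{w}}\boldsymbol{V}_{\mathbf{w}}^{\mathrm{T}}\boldsymbol{V}_{\pi}\boldsymbol{\Sigma}'_{\pi}$ is diagonal plays the same role as your unproved diagonalization claim. So your instinct about where the difficulty lies is accurate, but to be a proof your argument must either import an assumption of that kind or actually establish one of your bounds (i)/(ii); as it stands the decisive step is missing.
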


The optimization problem in Theorem \ref{theorem:PCR} is as same as the point-to-point ICP algorithm \cite{besl1992method}. For detailed proof, please refer to the Appendix.

\begin{table*}[t]
	\centering
	\resizebox{\textwidth}{!}{
		\begin{tabular}{c|cccccccccccccccccc}
			\hline
			Methods   									& aero. & bag  & cap  & car  & chair & earph. & guitar & knife & lamp & laptop & motor & mug  & pistol & rocker & skate. & table & Class mIoU & Insta. mIoU \\ \hline
			PointNet\cite{qi2016pointnet}        		& 81.6  & 68.7 & 74.0 & 70.3 & 87.6  & 68.5   & 88.9   & 80.0  & 74.9 & 83.6   & 56.5  & 77.6 & 75.2   & 53.9   & 69.4   & 79.9  & 74.4       & -           \\
			PointNet++\cite{qi2017pointnetplusplus} 		& 79.5  & 71.6 & 87.7 & 70.7 & 88.8  & 64.9   & 88.8   & 78.1  & 79.2 & 94.9   & 54.3  & 92.0 & 76.4   & 50.3   & 68.4   & 81.0  & 76.7       & -           \\
			PointCNN\cite{li2018pointcnn}        		& 78.0  & 80.1 & 78.2 & 68.2 & 81.2  & 70.2   & 82.0   & 70.6  & 68.9 & 80.8   & 48.6  & 77.3 & 63.2   & 50.6   & 63.2   & 82.0  & 71.4       & -           \\
			DGCNN\cite{wang2019dynamic}           		& 77.7  & 71.8 & 77.7 & 55.2 & 87.3  & 68.7   & 88.7   & 85.5  & 81.8 & 81.3   & 36.2  & 86.0 & 77.3   & 51.6   & 65.3   & 80.2  & 73.3       & -           \\ 
			PRIN\cite{you2020pointwise}              	& 67.4  & 61.5 & 69.7 & 59.5 & 77.7  & 65.8   & 75.7   & 77.2  & 65.9 & 82.0   & 44.2  & 79.8 & 63.6   & 53.0   & 67.5   & 70.7  & 67.6       & -           \\
			RI-Conv\cite{zhang2019rotation}$\star$      & 80.6  & 80.2 & 70.7 & 68.8 & 86.8  & 70.4   & 87.2   & 84.3  & 78.0 & 80.1   & 57.3  & 91.2 & 71.3   & 52.1   & 66.6   & 78.5  & 75.3       & -           \\
			Li \etal\cite{li2021rotation}$\star$			& 81.4  & 84.5 & 85.1 & 75.0 & 88.2  & 72.4   & 90.7   & 84.4  & 80.3 & 84.0   & 68.8  & 92.6 & 76.1   & 52.1   & 74.1   & 80.0  & 79.4       & 82.5        \\
			LGR-Net\cite{zhao2022rotation}$\dagger$$\star$ & 81.7  & 78.1 & 82.5 & 75.1 & 87.6  & 74.5   & 89.4   & 86.1  & 83.0 & 86.4   & 65.3  & 92.6 & 75.2   & 64.1   & 79.8   & 80.5  & 80.1       & 82.8        \\
			Li \etal\cite{li2021closer} 					& 83.7  & 62.9 & 79.1 & 73.4 & 90.1  & 64.2   & 90.3   & 86.4  & 82.5 & 87.3   & 46.5  & 89.1 & 75.4   & 46.1   & 66.6   & 81.3  & 75.3       & 83.1        \\ 
			Zhang \etal\cite{zhang2020learning}			& -     & -    & -    & -    & -     & -      & -      & -     & -    & -      & -     & -    & -      & -      & -      & -     & 80.2       & -           \\ \hline
			Ours 										& 83.1 & 81.0 & 85.4 & 75.1 & 90.6  & 68.4   & 90.8   & 87.3  & 82.9 & 93.8   & 73.1  & 95.0 & 81.7   & 59.1   & 77.7   & 79.0  & \textbf{81.5} & \textbf{83.9} \\
			Ours$\dagger$ 								& 82.7  & 81.1 & 86.1 & 76.0 & 90.5  & 68.2   & 91.1   & 87.7  & 83.5 & 95.3   & 72.3  & 95.5 & 82.5   & 60.8   & 76.9   & 82.2  & \textbf{82.1} & \textbf{84.9} \\ \hline
		\end{tabular}
	}
	\caption{\textbf{Part Segmentation Results on ShapeNet Dataset.} The results in this table are all trained and evaluated with arbitrary rotation. The results of previous methods are referred from the their publications. The $\dagger$ symbol after the method name represents using the normal vectors and $\star$ represents using the constructed features.} 
	\label{tab:partseg_shapenet}
\end{table*}

\begin{table*}[h]
	\centering
	\resizebox{0.42\textwidth}{!}{
		\begin{tabular}{l|ccc}
			\hline
			Methods 									& z/z & AR/AR & z/AR \\ 
			\hline
			PointNet\cite{qi2016pointnet} 				& 88.5 & 70.5 & 16.3 \\
			PointNet++\cite{qi2017pointnetplusplus} 		& 89.3 & 85.0 & 28.6 \\
			DGCNN\cite{wang2019dynamic}					& 92.2 & 81.1 & 20.6 \\
			Spherical CNN\cite{esteves2018learning} 		& 88.9 & 86.9 & 76.9 \\
			$a^{2}$SCNN\cite{liu2018deep} 				& 89.6 & 88.7 & 87.9 \\ 
			SFCNN\cite{rao2019spherical} 				& 91.4 & 90.1 & 84.8 \\
			SFCNN\cite{rao2019spherical}$\dagger$ 		& 92.3 & 91.0 & 85.3 \\
			RotPredictor\cite{fang2020rotpredictor} 		& 92.1 & 90.8 & -    \\
			REQNN\cite{shen20203d} 						& 83.0 & 83.0 & 83.0 \\
			RI-Conv\cite{zhang2019rotation}$\star$ 		& 86.5 & 86.4 & 86.4 \\
			Triangle-Net\cite{xiao2021triangle}$\star$ 	& - 	   & 86.7 & -    \\
			SRI-Net\cite{sun2019srinet}$\star$			& 87.0 & 87.0 & 87.0 \\
			ClusterNet\cite{chen2019clusternet} 			& 87.1 & 87.1 & 87.1 \\
			SPH-Net\cite{poulenard2019effective} 		& 87.7 & 87.6 & 86.6 \\
			Yu \etal\cite{yu2020deep}$\dag$ 				& 89.2 & 89.2 & 89.2 \\
			Li \etal\cite{li2021rotation}$\star$ 		& 89.4 & 89.3 & 89.4 \\
			RI-GCN\cite{kim2020rotation} 				& 89.5 & 89.5 & 89.5 \\
			RI-GCN\cite{kim2020rotation}$\dag$ 			& 91.0 & 91.0 & 91.0 \\
			LGR-Net\cite{zhao2022rotation}$\dagger$$\star$ 	& 90.9 & 91.1 & 90.9 \\
			Zhang \etal\cite{zhang2020learning}			& 91.0 & 91.0 & 91.0 \\
			Li \etal\cite{li2021closer} 					& 91.6 & 91.6 & 91.6 \\ 
			\hline
			Ours(MSG) 									& \textbf{91.8} & \textbf{91.8} & \textbf{91.8} \\ 
			Ours(MSG)$\dagger$								& \textbf{92.1} & \textbf{92.1} & \textbf{92.1} \\ 
			\hline
		\end{tabular}
	}
	\caption{\textbf{Classification Results on ModelNet40 Dataset.} The meaning of the table head: 1). z/z: training and testing with azimuthal rotation; 2). AR/AR: training and testing with arbitrary rotation; 3). z/AR: training with azimuthal rotation and testing with arbitrary rotation. The results of previous methods are referred from the their publications. The $\dagger$ symbol after the method name represents using the normal vectors and $\star$ represents using the constructed features.}
	\label{tab:cls_modelnet40}
\end{table*}

\subsection{Advantages}

\textbf{Generalizability.} 
Compared to some previous methods \cite{xiao2020endowing,li2021closer,zhang2020learning,zhao2022rotation,li2021rotation} that rely on the global scene of the point clouds and do not give invariant features for certain patterns, our proposed WFA algorithm is an operator only influenced by local point distributions. 
The point coordinates in our constructed IRF are not only invariant to the rotation but also variant to the background scenes. 
Therefore, our method is more general and applicable to all scenes. 
It is a remarkable and significant improvement because the fundamental purpose of rotation invariance learning on point clouds is to improve the robustness in practical engineering where the point clouds usually describe open and complex scenes, such as the application of robot and auto-driving.

\textbf{interpretability.}
Compared to the previous approaches \cite{yu2020deep,spezialetti2020learning,fang2020rotpredictor,li2021closer,xiao2020endowing} that attempt to learn an orientation or combine the features of all possible orientations via an attention-like mechanism, our WFA is more interoperable. 
The local IRF constructed by WFA avoids the problem of ambiguous inference stated by Equation (\ref{equ:unexpected_pattern_inference}) and Figure \ref{fig:pattern_orientation_alignment}. 
When the features are aligned with the network weights, the response can reflect the matching degree between the input pattern and the kernel's pattern.

\section{Experiments}

\subsection{Implementation Details}

We implement our network via PyTorch \cite{PyTorch}. We will make the code public on GitHub upon the publication of this work. Without special instructions, we train all models from scratch. Our experiments are executed on a computer equipped with an Intel(R) Core(TM) i7-6950X CPU  and an NVIDIA TITAN RTX GPU. We employ the PointNet++ as the default architecture and the neighbor point sampling operation is as same as the one in PointNet++. We use the official PointNet++'s MSG setting. The other training details are explained in detail in the following sections.

\subsection{Results on the Single-Object Point Clouds}

We evaluate our method on ModelNet40 \cite{wu20153d} and ShapeNet \cite{chang2015shapenet} dataset. For the experiments on these two datasets, we employ the classical PointNet++ (MSG: Multiple Scale Grouping) \cite{qi2017pointnetplusplus} framework and use Adam optimizer with the initial learning rate of $0.001$. We train the model on ModelNet40 dataset for $200$ epochs with the learning rate decay rate of $0.7$, and the step size of $20$. We train the model on ShapeNet dataset for $250$ epochs with the learning rate decay rate of $0.5$, and the step size of $20$. For the experiments on these two datasets, we use the Cartesian coordinates and the normal vectors as the input point features. For the additional features of normal vectors, we project them into the IRF via Equation (\ref{equ:R_i}).

\subsubsection{Results on ModelNet40 Dataset.} 

ModelNet40 contains CAD models of 40 categories. Following the common practice of \cite{qi2016pointnet}, we use the pre-processed training set of $9843$ models and the testing set of $2468$ models. For all experiments, we use Furthest Point Sampling (FPS) to sample $1024$ points as input and evaluate by mean instance accuracy. The results are reported in Table \ref{tab:cls_modelnet40}. Our method outperforms all state-of-the-art rotation-robust methods (including equivariant and invariant works). Compared to the original PointNet++ trained and evaluated with arbitrary rotations, our method achieves $7.1\%$ improvement. 

\subsubsection{Results on ShapeNet Dataset.} 

ShapeNet dataset contains $16881$ models of $16$ categories, and the 3D points are annotated with $50$ part labels. Following the standard split, the train split contains $14007$ samples, and the testing set contains $2874$ samples. We show the per-category segmentation IoU (Intersection of Union) the category and instance mIoU (mean IoU) in the Table \ref{tab:partseg_shapenet}. The results show that our method outperforms the state-of-the-art methods. Compared to the baseline, PointNet++ trained and tested with arbitrary rotations, we can achieve $7.7\%$ improvements. Even for PointNet++(MSG) trained and evaluated without any rotation (Instance mIoU is 85.1 and Class mIoU is 81.9), the performance gap is very small, which demonstrates the robustness of our method.

\begin{table*}[th]
	\centering
		\begin{tabular}{c|c}
			\hline
			Methods                 				& mIoU. \\ \hline
			RI-Conv \cite{zhang2019rotation} 	& 22.0 	\\
			LGR-Net \cite{zhao2022rotation}	 	& 43.4 	\\ \hline
			Ours                    				& 44.6 	\\ \hline
		\end{tabular}
    \caption{\textbf{Semantic Segmentation Results on S3DIS Dataset.} The results are trained with Area $1-4$ and $6$, evaluated on Area $5$. }
	\label{tab:semseg_s3dis}
\end{table*}

\begin{table*}[th]
	\centering
		\begin{tabular}{c|ccc|ccc}
			\hline
			\multirow{2}{*}{Methods} & \multicolumn{3}{c|}{BEV AP} & \multicolumn{3}{c}{3D AP} \\ \cline{2-7} 
			                         & Easy   & Moderate   & Hard  & Easy  & Moderate  & Hard  \\ \hline
			Point RCNN               & 90.21  & 87.89      & 85.51 & 88.88 & 78.63     & 77.38 \\
			Point RCNN(WFA)          & 89.89  & 88.13      & 85.97 & 88.64 & 78.93     & 77.89 \\ \hline
		\end{tabular}
	\caption{\textbf{3D Object Detection Results on KITTI Dataset.} The results are evaluated on the \textit{Car} category.}
	\label{tab:3d_object_detection_kitti}
\end{table*}

\begin{table*}[th]
	\centering
		\begin{tabular}{c|ccc|c}
			\hline
			\multirow{2}{*}{Alignment}        & \multicolumn{3}{c|}{$\boldsymbol{U}$}                                    & \multirow{2}{*}{mean acc.} \\ \cline{2-4}
			                                  & $\boldsymbol{u}_{1}$ & $\boldsymbol{u}_{2}$ & $\boldsymbol{u}_{3}$ 		 &                       \\ \hline
			\multirow{6}{*}{$\boldsymbol{V}$} & $\boldsymbol{v}_{i,1}$ & $\boldsymbol{v}_{i,2}$ & $\boldsymbol{v}_{i,3}$ & 92.1                  \\
			                                  & $\boldsymbol{v}_{i,3}$ & $\boldsymbol{v}_{i,2}$ & $\boldsymbol{v}_{i,1}$ & 90.6                  \\
			                                  & $\boldsymbol{v}_{i,3}$ & $\boldsymbol{v}_{i,1}$ & $\boldsymbol{v}_{i,2}$ & 90.8                  \\
			                                  & $\boldsymbol{v}_{i,1}$ & $\boldsymbol{v}_{i,3}$ & $\boldsymbol{v}_{i,2}$ & 91.7                  \\
			                                  & $\boldsymbol{v}_{i,2}$ & $\boldsymbol{v}_{i,1}$ & $\boldsymbol{v}_{i,3}$ & 91.7                  \\
			                                  & $\boldsymbol{v}_{i,2}$ & $\boldsymbol{v}_{i,3}$ & $\boldsymbol{v}_{i,1}$ & 91.5                  \\ \hline
		\end{tabular}
	\caption{\textbf{The Ablation Study on ModelNet40 Dataset about Different Aligning Orders.} The notations in the table are defined as same as Equation (\ref{equ:R_i}). }
	\label{tab:ablation_alignment}
\end{table*}

\begin{table}[t]
	\centering
		\begin{tabular}{l|c|cc}
			\hline
			\multirow{2}{*}{Baseline Models} 			& ModelNet40 & \multicolumn{2}{c}{ShapeNet} \\ \cline{2-4} 
			                              				& mean acc.  & Class mIoU   & Insta. mIoU   \\ \hline
			PointNet \cite{qi2016pointnet}    			& 70.5/88.2  & 74.3/80.9    & 77.9/80.1            \\
			PointNet++ \cite{qi2017pointnetplusplus}    & 85.0/92.1  & 76.8/82.1    & 80.1/84.9        \\
			DGCNN \cite{wang2019dynamic}                & 81.1/92.0  & 73.3/81.7    & 77.6/79.7·              \\ \hline
		\end{tabular}
	\caption{\textbf{The Ablation Study about the Baseline Models.} The baseline results are based on our reimplementations. The models are trained and evaluated with arbitrary rotation. The results before and after "/" represent the baseline model and using our WFA. }
	\label{tab:ablation_baseline_models}
\end{table}

\subsection{Experiments on the Point Clouds of Open Scenes}

To prove the generalness of our method, besides the single-object scenes, we conduct experiments on the point clouds of complex open scenes. 

\subsubsection{Semantic Segmentation Results on the S3DIS Dataset.} 

S3DIS \cite{armeni20163d} semantic segmentation dataset has the point clouds scanned from $271$ rooms in $6$ Areas. The annotations contain $13$ semantic categories. The results in Table \ref{tab:semseg_s3dis} show our method can outperform the previous rotation-invariant methods (Note few previous works perform experiments on the point clouds of the complex scenes with multiple objects). 

\subsubsection{3D Object Detection Results on the KITTI Dataset.} 

To demonstrate the generalness of our method, we apply our WFA methods on the Point RCNN \cite{shi20193d} whose backbone is based on the PointNet++ and perform experiments on the 3D object detection task. We perform experiments on KITTI \cite{Geiger2012CVPR} dataset. KITTI 3D detection dataset contains $7481$ \textit{training} annotated samples. The annotated samples are divided into \textit{train} set with 3712 samples and \textit{val} set with 3769 samples. We follow the consistent setting with the original paper to train the model and evaluate on \textbf{Car} class. The results are shown in the Table \ref{tab:3d_object_detection_kitti}. The results show that applying our WFA operation obtains comparable performance and achieves improvements on the \textit{Moderate} and \textit{Hard} objects. Compared to the single-object or small-range point clouds, the point clouds captured in the outdoor open scenes are more complex: 1) the object patterns are influenced by the background scenes, 2) the points are much more sparse and the point densities vary dramatically. The results suggest the generalness of our WFA method.

\subsection{Ablation Studies} 

\subsubsection{Different Alignments.} If we only want to achieve invariance to rotations, we do not need to align the principal axes according to the descending order of the SVD eigenvalues. There are $6$ possible alignments. We perform an ablation study on the ModelNet40 dataset, and the results of different alignments are shown in Table \ref{tab:ablation_alignment}. It shows that the aligning along the order of eigenvalues achieves the best performance. The results in the first row and the second row deserves our attention. The second row represents the alignment opposite to our WFA, \ie aligning the max principal axis with the min one and aligning the min principal axis with the max one. Compared to the previous PCA-based works that learn the orientation of the principal axes through deep learning, our WFA gives an interpretable explanation that the different orientations of the principal axes are not equivariant to each other and that there is a better orientation.

\subsubsection{Different Baseline Models.}

Our WFA algorithm is a general operator that can be applied on arbitrary models that take raw points as input. In the Table \ref{tab:ablation_baseline_models}, we shows the results on $5$ popular baseline models, PointNet \cite{qi2016pointnet}, PointNet++ \cite{qi2017pointnetplusplus} and DGCNN \cite{wang2019dynamic}. The results suggest that our WFA can generally improve the rotation robustness of the mainstream frameworks.


\section{Conclusion}

In this paper, we propose a robust rotation-invariant algorithm, WFA (Weight-Feature Alignment). WFA proposes to transform the input points into the reference frame decided by the principal axes of the network weights. So the feature extraction can eliminate the effect of the pattern orientations and only concentrate on the invariant patterns. We perform the experiments on the point clouds of various scenes, including the single-object and open scenes point clouds. The experiment results demonstrate the robustness and the generalness of our method.

\bibliography{mybibfile}

\begin{thebibliography}{10}
\expandafter\ifx\csname url\endcsname\relax
  \def\url#1{\texttt{#1}}\fi
\expandafter\ifx\csname urlprefix\endcsname\relax\def\urlprefix{URL }\fi
\expandafter\ifx\csname href\endcsname\relax
  \def\href#1#2{#2} \def\path#1{#1}\fi

\bibitem{cohen2016group}
T.~Cohen, M.~Welling, Group equivariant convolutional networks, in:
  International conference on machine learning, PMLR, 2016, pp. 2990--2999.

\bibitem{cohen2018spherical}
T.~S. Cohen, M.~Geiger, J.~K{\"o}hler, M.~Welling, Spherical cnns, arXiv
  preprint arXiv:1801.10130.

\bibitem{weiler20183d}
M.~Weiler, M.~Geiger, M.~Welling, W.~Boomsma, T.~Cohen, 3d steerable cnns:
  Learning rotationally equivariant features in volumetric data, arXiv preprint
  arXiv:1807.02547.

\bibitem{cohen2016steerable}
T.~S. Cohen, M.~Welling, Steerable cnns, arXiv preprint arXiv:1612.08498.

\bibitem{shen20203d}
W.~Shen, B.~Zhang, S.~Huang, Z.~Wei, Q.~Zhang, 3d-rotation-equivariant
  quaternion neural networks, in: Computer Vision--ECCV 2020: 16th European
  Conference, Glasgow, UK, August 23--28, 2020, Proceedings, Part XX 16,
  Springer, 2020, pp. 531--547.

\bibitem{worrall2018cubenet}
D.~Worrall, G.~Brostow, Cubenet: Equivariance to 3d rotation and translation,
  in: Proceedings of the European Conference on Computer Vision (ECCV), 2018,
  pp. 567--584.

\bibitem{zhao2020quaternion}
Y.~Zhao, T.~Birdal, J.~E. Lenssen, E.~Menegatti, L.~Guibas, F.~Tombari,
  Quaternion equivariant capsule networks for 3d point clouds, in: European
  Conference on Computer Vision, Springer, 2020, pp. 1--19.

\bibitem{li2021leveraging}
X.~Li, Y.~Weng, L.~Yi, L.~J. Guibas, A.~Abbott, S.~Song, H.~Wang, Leveraging se
  (3) equivariance for self-supervised category-level object pose estimation
  from point clouds, Advances in Neural Information Processing Systems 34
  (2021) 15370--15381.

\bibitem{rao2019spherical}
Y.~Rao, J.~Lu, J.~Zhou, Spherical fractal convolutional neural networks for
  point cloud recognition, in: Proceedings of the IEEE/CVF Conference on
  Computer Vision and Pattern Recognition, 2019, pp. 452--460.

\bibitem{poulenard2019effective}
A.~Poulenard, M.-J. Rakotosaona, Y.~Ponty, M.~Ovsjanikov, Effective
  rotation-invariant point cnn with spherical harmonics kernels, in: 2019
  International Conference on 3D Vision (3DV), IEEE, 2019, pp. 47--56.

\bibitem{liu2018deep}
M.~Liu, F.~Yao, C.~Choi, A.~Sinha, K.~Ramani, Deep learning 3d shapes using
  alt-az anisotropic 2-sphere convolution, in: International Conference on
  Learning Representations, 2018.

\bibitem{yu2020deep}
R.~Yu, X.~Wei, F.~Tombari, J.~Sun, Deep positional and relational feature
  learning for rotation-invariant point cloud analysis, in: European Conference
  on Computer Vision, Springer, 2020, pp. 217--233.

\bibitem{zhang2019rotation}
Z.~Zhang, B.-S. Hua, D.~W. Rosen, S.-K. Yeung, Rotation invariant convolutions
  for 3d point clouds deep learning, in: 2019 International Conference on 3D
  Vision (3DV), IEEE, 2019, pp. 204--213.

\bibitem{zhang2022riconv++}
Z.~Zhang, B.-S. Hua, S.-K. Yeung, Riconv++: Effective rotation invariant
  convolutions for 3d point clouds deep learning, International Journal of
  Computer Vision 130~(5) (2022) 1228--1243.

\bibitem{deng2018ppf}
H.~Deng, T.~Birdal, S.~Ilic, Ppf-foldnet: Unsupervised learning of rotation
  invariant 3d local descriptors, in: Proceedings of the European Conference on
  Computer Vision (ECCV), 2018, pp. 602--618.

\bibitem{sun2019srinet}
X.~Sun, Z.~Lian, J.~Xiao, Srinet: Learning strictly rotation-invariant
  representations for point cloud classification and segmentation, in:
  Proceedings of the 27th ACM International Conference on Multimedia, 2019, pp.
  980--988.

\bibitem{li2021rotation}
X.~Li, R.~Li, G.~Chen, C.-W. Fu, D.~Cohen-Or, P.-A. Heng, A rotation-invariant
  framework for deep point cloud analysis, IEEE Transactions on Visualization
  and Computer Graphics.

\bibitem{xiao2021triangle}
C.~Xiao, J.~Wachs, Triangle-net: Towards robustness in point cloud learning,
  in: Proceedings of the IEEE/CVF Winter Conference on Applications of Computer
  Vision, 2021, pp. 826--835.

\bibitem{zhao2022rotation}
C.~Zhao, J.~Yang, X.~Xiong, A.~Zhu, Z.~Cao, X.~Li, Rotation invariant point
  cloud analysis: Where local geometry meets global topology, Pattern
  Recognition 127 (2022) 108626.

\bibitem{chen2019clusternet}
C.~Chen, G.~Li, R.~Xu, T.~Chen, M.~Wang, L.~Lin, Clusternet: Deep hierarchical
  cluster network with rigorously rotation-invariant representation for point
  cloud analysis, in: Proceedings of the IEEE/CVF Conference on Computer Vision
  and Pattern Recognition, 2019, pp. 4994--5002.

\bibitem{spezialetti2020learning}
R.~Spezialetti, F.~Stella, M.~Marcon, L.~Silva, S.~Salti, L.~di~Stefano,
  Learning to orient surfaces by self-supervised spherical cnns, in: NeurIPS,
  2020.

\bibitem{fang2020rotpredictor}
J.~Fang, D.~Zhou, X.~Song, S.~Jin, R.~Yang, L.~Zhang, Rotpredictor:
  Unsupervised canonical viewpoint learning for point cloud classification, in:
  2020 International Conference on 3D Vision (3DV), IEEE, 2020, pp. 987--996.

\bibitem{xiao2020endowing}
Z.~Xiao, H.~Lin, R.~Li, L.~Geng, H.~Chao, S.~Ding, Endowing deep 3d models with
  rotation invariance based on principal component analysis, in: 2020 IEEE
  International Conference on Multimedia and Expo (ICME), IEEE, 2020, pp. 1--6.

\bibitem{li2021closer}
F.~Li, K.~Fujiwara, F.~Okura, Y.~Matsushita, A closer look at
  rotation-invariant deep point cloud analysis, in: Proceedings of the IEEE/CVF
  International Conference on Computer Vision, 2021, pp. 16218--16227.

\bibitem{kim2020rotation}
S.~KIM, J.~Park, B.~Han, Rotation-invariant local-to-global representation
  learning for 3d point cloud, Advances in Neural Information Processing
  Systems 33.

\bibitem{zhang2020learning}
J.~Zhang, M.-Y. Yu, R.~Vasudevan, M.~Johnson-Roberson, Learning
  rotation-invariant representations of point clouds using aligned edge
  convolutional neural networks, in: 2020 International Conference on 3D Vision
  (3DV), IEEE, 2020, pp. 200--209.

\bibitem{qi2016pointnet}
C.~R. Qi, H.~Su, K.~Mo, L.~J. Guibas, Pointnet: Deep learning on point sets for
  3d classification and segmentation, arXiv preprint arXiv:1612.00593.

\bibitem{qi2017pointnetplusplus}
C.~R. Qi, L.~Yi, H.~Su, L.~J. Guibas, Pointnet++: Deep hierarchical feature
  learning on point sets in a metric space, arXiv preprint arXiv:1706.02413.

\bibitem{wang2019dynamic}
Y.~Wang, Y.~Sun, Z.~Liu, S.~E. Sarma, M.~M. Bronstein, J.~M. Solomon, Dynamic
  graph cnn for learning on point clouds, Acm Transactions On Graphics (tog)
  38~(5) (2019) 1--12.

\bibitem{wu2018pointconv}
W.~Wu, Z.~Qi, L.~Fuxin, Pointconv: Deep convolutional networks on 3d point
  clouds, arXiv preprint arXiv:1811.07246.

\bibitem{thomas2019KPConv}
H.~Thomas, C.~R. Qi, J.-E. Deschaud, B.~Marcotegui, F.~Goulette, L.~J. Guibas,
  Kpconv: Flexible and deformable convolution for point clouds, Proceedings of
  the IEEE International Conference on Computer Vision.

\bibitem{li2018so}
J.~Li, B.~M. Chen, G.~H. Lee, So-net: Self-organizing network for point cloud
  analysis, in: Proceedings of the IEEE conference on computer vision and
  pattern recognition, 2018, pp. 9397--9406.

\bibitem{wang2018deep}
S.~Wang, S.~Suo, W.-C. Ma, A.~Pokrovsky, R.~Urtasun, Deep parametric continuous
  convolutional neural networks, in: Proceedings of the IEEE Conference on
  Computer Vision and Pattern Recognition, 2018, pp. 2589--2597.

\bibitem{li2018pointcnn}
Y.~Li, R.~Bu, M.~Sun, W.~Wu, X.~Di, B.~Chen, Pointcnn: Convolution on
  x-transformed points, Advances in neural information processing systems 31
  (2018) 820--830.

\bibitem{atzmon2018point}
M.~Atzmon, H.~Maron, Y.~Lipman, Point convolutional neural networks by
  extension operators, ACM Transactions on Graphics (TOG) 37~(4) (2018) 1--12.

\bibitem{gybenko1989approximation}
G.~Gybenko, et~al., Approximation by superposition of sigmoidal functions,
  Mathematics of Control, Signals and Systems 2~(4) (1989) 303--314.

\bibitem{csaji2001approximation}
B.~C. Cs{\'a}ji, et~al., Approximation with artificial neural networks, Faculty
  of Sciences, Etvs Lornd University, Hungary 24~(48) (2001) 7.

\bibitem{esteves2018learning}
C.~Esteves, C.~Allen-Blanchette, A.~Makadia, K.~Daniilidis, Learning so (3)
  equivariant representations with spherical cnns, in: Proceedings of the
  European Conference on Computer Vision (ECCV), 2018, pp. 52--68.

\bibitem{vaswani2017attention}
A.~Vaswani, N.~Shazeer, N.~Parmar, J.~Uszkoreit, L.~Jones, A.~N. Gomez,
  {\L}.~Kaiser, I.~Polosukhin, Attention is all you need, Advances in neural
  information processing systems 30.

\bibitem{zhao2021point}
H.~Zhao, L.~Jiang, J.~Jia, P.~H. Torr, V.~Koltun, Point transformer, in:
  Proceedings of the IEEE/CVF international conference on computer vision,
  2021, pp. 16259--16268.

\bibitem{engel2021point}
N.~Engel, V.~Belagiannis, K.~Dietmayer, Point transformer, IEEE access 9 (2021)
  134826--134840.

\bibitem{park2022fast}
C.~Park, Y.~Jeong, M.~Cho, J.~Park, Fast point transformer, in: Proceedings of
  the IEEE/CVF Conference on Computer Vision and Pattern Recognition, 2022, pp.
  16949--16958.

\bibitem{guo2021pct}
M.-H. Guo, J.-X. Cai, Z.-N. Liu, T.-J. Mu, R.~R. Martin, S.-M. Hu, Pct: Point
  cloud transformer, Computational Visual Media 7 (2021) 187--199.

\bibitem{graham2017submanifold}
B.~Graham, L.~Van~der Maaten, Submanifold sparse convolutional networks, arXiv
  preprint arXiv:1706.01307.

\bibitem{zhou2018voxelnet}
Y.~Zhou, O.~Tuzel, Voxelnet: End-to-end learning for point cloud based 3d
  object detection, in: Proceedings of the IEEE conference on computer vision
  and pattern recognition, 2018, pp. 4490--4499.

\bibitem{mao2021voxel}
J.~Mao, Y.~Xue, M.~Niu, H.~Bai, J.~Feng, X.~Liang, H.~Xu, C.~Xu, Voxel
  transformer for 3d object detection, in: Proceedings of the IEEE/CVF
  International Conference on Computer Vision, 2021, pp. 3164--3173.

\bibitem{he2022voxel}
C.~He, R.~Li, S.~Li, L.~Zhang, Voxel set transformer: A set-to-set approach to
  3d object detection from point clouds, in: Proceedings of the IEEE/CVF
  Conference on Computer Vision and Pattern Recognition, 2022, pp. 8417--8427.

\bibitem{zhang2022pvt}
C.~Zhang, H.~Wan, X.~Shen, Z.~Wu, Pvt: Point-voxel transformer for point cloud
  learning, International Journal of Intelligent Systems 37~(12) (2022)
  11985--12008.

\bibitem{besl1992method}
P.~J. Besl, N.~D. McKay, Method for registration of 3-d shapes, in: Sensor
  fusion IV: control paradigms and data structures, Vol. 1611, Spie, 1992, pp.
  586--606.

\bibitem{you2020pointwise}
Y.~You, Y.~Lou, Q.~Liu, Y.-W. Tai, L.~Ma, C.~Lu, W.~Wang, Pointwise
  rotation-invariant network with adaptive sampling and 3d spherical voxel
  convolution, in: Proceedings of the AAAI Conference on Artificial
  Intelligence, Vol.~34, 2020, pp. 12717--12724.

\bibitem{PyTorch}
A.~Paszke, S.~Gross, F.~Massa, A.~Lerer, J.~Bradbury, G.~Chanan, T.~Killeen,
  Z.~Lin, N.~Gimelshein, L.~Antiga, A.~Desmaison, A.~Kopf, E.~Yang, Z.~DeVito,
  M.~Raison, A.~Tejani, S.~Chilamkurthy, B.~Steiner, L.~Fang, J.~Bai,
  S.~Chintala,
  \href{http://papers.neurips.cc/paper/9015-pytorch-an-imperative-style-high-performance-deep-learning-library.pdf}{Pytorch:
  An imperative style, high-performance deep learning library}, in: H.~Wallach,
  H.~Larochelle, A.~Beygelzimer, F.~d\textquotesingle Alch\'{e}-Buc, E.~Fox,
  R.~Garnett (Eds.), Advances in Neural Information Processing Systems 32,
  Curran Associates, Inc., 2019, pp. 8024--8035.
\newline\urlprefix\url{http://papers.neurips.cc/paper/9015-pytorch-an-imperative-style-high-performance-deep-learning-library.pdf}

\bibitem{wu20153d}
Z.~Wu, S.~Song, A.~Khosla, F.~Yu, L.~Zhang, X.~Tang, J.~Xiao, 3d shapenets: A
  deep representation for volumetric shapes, in: Proceedings of the IEEE
  conference on computer vision and pattern recognition, 2015, pp. 1912--1920.

\bibitem{chang2015shapenet}
A.~X. Chang, T.~Funkhouser, L.~Guibas, P.~Hanrahan, Q.~Huang, Z.~Li,
  S.~Savarese, M.~Savva, S.~Song, H.~Su, et~al., Shapenet: An information-rich
  3d model repository, arXiv preprint arXiv:1512.03012.

\bibitem{armeni20163d}
I.~Armeni, O.~Sener, A.~R. Zamir, H.~Jiang, I.~Brilakis, M.~Fischer,
  S.~Savarese, 3d semantic parsing of large-scale indoor spaces, in:
  Proceedings of the IEEE Conference on Computer Vision and Pattern
  Recognition, 2016, pp. 1534--1543.

\bibitem{shi20193d}
S.~Shi, X.~Wang, H.~P. Li, et~al., 3d object proposal generation and detection
  from point cloud, in: Proceedings of the IEEE conference on computer vision
  and pattern recognition, Long Beach, CA, USA, 2019, pp. 15--20.

\bibitem{Geiger2012CVPR}
A.~Geiger, P.~Lenz, R.~Urtasun, Are we ready for autonomous driving? the kitti
  vision benchmark suite, in: Conference on Computer Vision and Pattern
  Recognition (CVPR), 2012.

\end{thebibliography}

\newpage
\appendix
\onecolumn

\section{Appendix}

\subsection{The Rotation Invariance Proof of WFA}

\begin{theorem}
	\label{equ:rotation_invariance_WFA}
	For arbitrary rotation transformation $R$ applied on the point clouds $\mathcal{P} = \{\boldsymbol{p}_{i} \vert i = 1, \dots, n\}$, $\boldsymbol{y}_{i}$ in Equation (\ref{equ:mlp_model}) keeps invariant, \ie
	\begin{equation}
		R(\boldsymbol{y}_{i}) = \boldsymbol{y}_{i}
	\end{equation}
\end{theorem}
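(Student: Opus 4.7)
The plan is to track how every quantity appearing in Equation~(\ref{equ:mlp_model}) transforms when the input point cloud is rotated by an arbitrary orthogonal matrix $\boldsymbol{R}\in SO(3)$, and to show that the composition $\widetilde{\boldsymbol{W}}^{\mathrm{T}}\boldsymbol{X}'_i$ is left untouched. First I would record the elementary transformation laws: under $\boldsymbol{p}_j\mapsto\boldsymbol{R}\boldsymbol{p}_j$, the local barycenter transforms as $\bar{\boldsymbol{p}}_i\mapsto\boldsymbol{R}\bar{\boldsymbol{p}}_i$, so the centered coordinate matrix becomes $\boldsymbol{X}_i\mapsto\boldsymbol{R}\boldsymbol{X}_i$, and hence the covariance transforms as $\boldsymbol{X}_i\boldsymbol{X}_i^{\mathrm{T}}\mapsto\boldsymbol{R}(\boldsymbol{X}_i\boldsymbol{X}_i^{\mathrm{T}})\boldsymbol{R}^{\mathrm{T}}$. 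From the eigenproblem in Equation~(\ref{equ:pca_x}) it follows immediately that, at the level of subspaces, $\boldsymbol{V}_i\mapsto\boldsymbol{R}\boldsymbol{V}_i$ with the same eigenvalues $\boldsymbol{\Lambda}_i$, while $\boldsymbol{U}$ is unchanged because the network weights $\widetilde{\boldsymbol{W}}$ are independent of the input rotation.

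Next I would plug these into Equation~(\ref{equ:R_i}) to obtain
\begin{equation*}
\boldsymbol{R}'_i \;=\; \boldsymbol{U}(\boldsymbol{R}\boldsymbol{V}_i)^{\mathrm{T}} \;=\; \boldsymbol{U}\boldsymbol{V}_i^{\mathrm{T}}\boldsymbol{R}^{\mathrm{T}} \;=\; \boldsymbol{R}_i\boldsymbol{R}^{\mathrm{T}},
\end{equation*}
and hence
\begin{equation*}
\boldsymbol{X}'_i \big|_{\text{rotated}} \;=\; \boldsymbol{R}'_i(\boldsymbol{R}\boldsymbol{X}_i) \;=\; \boldsymbol{R}_i\boldsymbol{R}^{\mathrm{T}}\boldsymbol{R}\boldsymbol{X}_i \;=\; \boldsymbol{R}_i\boldsymbol{X}_i \;=\; \boldsymbol{X}'_i.
\end{equation*}
Substituting into Equation~(\ref{equ:mlp_model}) gives $\boldsymbol{y}_i\big|_{\text{rotated}}=\widetilde{\boldsymbol{W}}^{\mathrm{T}}\boldsymbol{X}'_i+\boldsymbol{b}=\boldsymbol{y}_i$, which is the claim. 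The argument for the extra normal-vector input mentioned in Section~4.2 is identical, since normals rotate as vectors and are projected through the same $\boldsymbol{R}_i$.

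The step I expect to be the real obstacle is the sign and ordering ambiguity of PCA: even under the simplest assumption of distinct eigenvalues, each column of $\boldsymbol{V}_i$ is only determined up to sign, so the bare relation $\boldsymbol{V}_i\mapsto\boldsymbol{R}\boldsymbol{V}_i$ is strictly a statement about unoriented axes and can fail by a diagonal $\operatorname{diag}(\pm1,\pm1,\pm1)$ factor. I would therefore devote a separate lemma to the disambiguation rule stated in Figure~\ref{fig:WFA_details}, namely that the orientation of $\boldsymbol{V}_i$ is chosen so that the coordinates of $\boldsymbol{p}_i-\bar{\boldsymbol{p}}_i$ in the basis $\{\boldsymbol{v}_{i,1},\boldsymbol{v}_{i,2},\boldsymbol{v}_{i,3}\}$ are all nonnegative. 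After rotation, the candidate eigenvectors are $\boldsymbol{R}\boldsymbol{v}_{i,k}$ and the vector to be tested is $\boldsymbol{R}(\boldsymbol{p}_i-\bar{\boldsymbol{p}}_i)$, so the three sign-test inner products $(\boldsymbol{R}\boldsymbol{v}_{i,k})^{\mathrm{T}}\boldsymbol{R}(\boldsymbol{p}_i-\bar{\boldsymbol{p}}_i)=\boldsymbol{v}_{i,k}^{\mathrm{T}}(\boldsymbol{p}_i-\bar{\boldsymbol{p}}_i)$ are unchanged; the rule therefore selects the same signs before and after rotation, so the relation $\boldsymbol{V}_i\mapsto\boldsymbol{R}\boldsymbol{V}_i$ holds with the oriented frames, not only the subspaces. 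The analogous disambiguation for $\boldsymbol{U}$ uses $-\bar{\boldsymbol{w}}$, which does not depend on the input at all, so $\boldsymbol{U}$ is genuinely invariant.

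Finally I would note two residual degenerate cases that should be acknowledged: (i) repeated eigenvalues, where the eigenspace is higher-dimensional and the PCA basis is only determined up to an orthogonal transformation of that eigenspace (a measure-zero event for generic point sets, and one that can be handled by a continuity/perturbation argument), and (ii) a sign-test inner product that happens to vanish, which I would break by a canonical tie-breaking rule (e.g. lexicographic) whose output commutes with $\boldsymbol{R}$ by the same computation as above. With these edge cases dispatched, the chain of equalities in the second paragraph completes the proof.
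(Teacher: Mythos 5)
Your proposal is correct and follows essentially the same route as the paper's appendix proof: both rely on the equivariance $\boldsymbol{V}_{i} \mapsto \boldsymbol{R}\boldsymbol{V}_{i}$ with $\boldsymbol{U}$ fixed, so that $\boldsymbol{U}{(\boldsymbol{R}\boldsymbol{V}_{i})}^{\mathrm{T}}\boldsymbol{R}(\boldsymbol{p}_{j}-\bar{\boldsymbol{p}}_{i}) = \boldsymbol{U}\boldsymbol{V}_{i}^{\mathrm{T}}(\boldsymbol{p}_{j}-\bar{\boldsymbol{p}}_{i})$ and the rotation cancels before the linear layer is applied. Your additional lemma verifying that the first-quadrant sign-disambiguation rule commutes with rotation (via the invariance of the inner products $\boldsymbol{v}_{i,k}^{\mathrm{T}}(\boldsymbol{p}_{i}-\bar{\boldsymbol{p}}_{i})$), together with your remarks on repeated eigenvalues and vanishing sign tests, makes explicit a step the paper's proof silently assumes, and only strengthens the argument.
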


\begin{proof}
	We use the orthogonal matrix $\boldsymbol{T} \in \mathrm{SO}(3)$ represents the rotation $R$, and use $j_{1}, \dots \j_{n_{r}} \in \mathcal{A}_{r}(\boldsymbol{p}_{i})$ to represent the point indices of $n_{r}$ neighboring points. Then, we have,
	
	\begin{equation}
		\begin{aligned}
			R(\boldsymbol{y}_{i}) & = R(\widetilde{\boldsymbol{W}}^{\mathrm{T}}\boldsymbol{X}_{i}') = \widetilde{\boldsymbol{W}}^{\mathrm{T}}R(\boldsymbol{X}_{i}') \\
			& = \widetilde{\boldsymbol{W}}^{\mathrm{T}}R([\tilde{\boldsymbol{p}}_{j_{1}}, \dots, \tilde{\boldsymbol{p}}_{j_{n_{r}}}]) \\
			& = \widetilde{\boldsymbol{W}}^{\mathrm{T}}[R(\tilde{\boldsymbol{p}}_{j_{1}}), \dots, R(\tilde{\boldsymbol{p}}_{j_{n_{r}}})]) \\
		\end{aligned}
	\end{equation}

	\noindent Therefore, we only have to prove:

	\begin{equation}
		R(\tilde{\boldsymbol{p}}_{j}) = \tilde{\boldsymbol{p}}_{j}, \quad j \in \mathcal{A}_{r}(\boldsymbol{p}_{i})
	\end{equation}

	\begin{equation}
		\begin{aligned}
			R(\tilde{\boldsymbol{p}}_{j}) & = R(\boldsymbol{T}_{i}(\boldsymbol{p}_{j} - \bar{\boldsymbol{p}}_{i})) \\
			& = R(\boldsymbol{U}\boldsymbol{V}_{i}^{\mathrm{T}}(\boldsymbol{p}_{j} - \bar{\boldsymbol{p}}_{i})) \\
			& = \boldsymbol{U}{(\boldsymbol{T}\boldsymbol{V}_{i})}^{\mathrm{T}}(\boldsymbol{T}\boldsymbol{p}_{j} - \boldsymbol{T}\bar{\boldsymbol{p}}_{i}) \\
			& = \boldsymbol{U}\boldsymbol{V}_{i}^{\mathrm{T}}\boldsymbol{T}^{\mathrm{T}}\boldsymbol{T}(\boldsymbol{p}_{j} - \bar{\boldsymbol{p}}_{i}) \\
			& = \boldsymbol{U}\boldsymbol{V}_{i}^{\mathrm{T}}(\boldsymbol{p}_{j} - \bar{\boldsymbol{p}}_{i}) \\
			& = \tilde{\boldsymbol{p}}_{j}
		\end{aligned}
	\end{equation}

\end{proof}

\subsection{The Proof of Theorem \ref{theorem:PCR}}

\begin{proof}

	\begin{equation}
		\scalemath{0.8}{
			\begin{aligned}
				{\left\Vert \tilde{\boldsymbol{w}}_{k} - \boldsymbol{T}\boldsymbol{x}_{\pi(k)} \right\Vert}_{2}^{2} & = \left(\tilde{\boldsymbol{w}}_{k} - \boldsymbol{T}\boldsymbol{x}_{\pi(k)}\right)^{\mathrm{T}}\left(\tilde{\boldsymbol{w}}_{k} - \boldsymbol{T}\boldsymbol{x}_{\pi(k)}\right) \\[2ex]
				& = \tilde{\boldsymbol{w}}_{k}^{\mathrm{T}}\tilde{\boldsymbol{w}}_{k} + \boldsymbol{x}_{\pi(k)}^{\mathrm{T}}\boldsymbol{T}^{\mathrm{T}}\boldsymbol{T}\boldsymbol{x}_{\pi(k)} - \boldsymbol{x}_{\pi(k)}^{\mathrm{T}}\boldsymbol{T}^{\mathrm{T}}\tilde{\boldsymbol{w}}_{k} - \tilde{\boldsymbol{w}}_{k}^{\mathrm{T}}\boldsymbol{T}\boldsymbol{x}_{\pi(k)} \\[2ex]
				& = \Vert\tilde{\boldsymbol{w}}_{k}\Vert_{2}^{2} + \Vert\boldsymbol{x}_{\pi(k)}\Vert_{2}^{2} - 2\tilde{\boldsymbol{w}}_{k}^{\mathrm{T}}\boldsymbol{T}\boldsymbol{x}_{\pi(k)}
			\end{aligned}	
		}
	\end{equation}
	
	\noindent Because $\Vert\tilde{\boldsymbol{w}}_{k}\Vert_{2}^{2}$ and $\Vert\boldsymbol{x}_{\pi(k)}\Vert_{2}^{2}$ is invariant, the original optimization problem becomes to obtain the optimal solution $\boldsymbol{T}^{*}$ by:

	\begin{equation}
		\scalemath{0.86}{
			\begin{aligned}
				\boldsymbol{T}^{*} & = \underset{\boldsymbol{T}}{\arg\max} \left(\sum_{i=1}^{n} \tilde{\boldsymbol{w}}_{k}^{\mathrm{T}}\boldsymbol{T}\boldsymbol{x}_{\pi(k)}\right) \\
			 	& = \underset{\boldsymbol{T}}{\arg\max} \left(\mathrm{tr}\left(\widetilde{\boldsymbol{W}}^{\mathrm{T}}\boldsymbol{T}\boldsymbol{X}_{\pi}\right)\right)
			\end{aligned}
		}
	\end{equation}
	
	\noindent According to the property of the matrix trace $\mathrm{tr}(\boldsymbol{A}\boldsymbol{B}) = \mathrm{tr}(\boldsymbol{B}\boldsymbol{A})$ if the first dimension shape of $\boldsymbol{A}$ is as same as the second dimension shape of $\boldsymbol{B}$, we have:
	
	\begin{equation}
		\label{equ:max_optimization_problem}
		\scalemath{1.0}{
			\mathrm{tr}\left(\widetilde{\boldsymbol{W}}^{\mathrm{T}}\boldsymbol{T}\boldsymbol{X}_{\pi}\right) = \mathrm{tr}\left(\boldsymbol{T}\boldsymbol{X}_{\pi}\widetilde{\boldsymbol{W}}^{\mathrm{T}}\right)
		}
	\end{equation}
	
	\noindent Let $\boldsymbol{H} = \boldsymbol{X}_{\pi}\widetilde{\boldsymbol{W}}^{\mathrm{T}}$, and perform SVD on $\boldsymbol{H}$:
	
	\begin{equation}
		\boldsymbol{H} = \boldsymbol{U}_{\mathbf{H}}\boldsymbol{\Sigma}_{\mathbf{H}}\boldsymbol{V}^{\mathrm{T}}_{\mathbf{H}}
	\end{equation}
	
	\noindent where $\boldsymbol{U}_{\mathbf{H}}$ and $\boldsymbol{V}_{\mathbf{H}}$ are orthogonal matrixes, and $\boldsymbol{\Sigma}_{\mathbf{H}}$ are diagonal matrix whose diagonal entries are nonnegative singular values. Then, we have:
	
	\begin{equation}
		\label{equ:max_optimization_problem2}
		\scalemath{1.0}{
			\begin{aligned}
				\mathrm{tr}\left(\boldsymbol{T}\boldsymbol{X}_{\pi}\widetilde{\boldsymbol{W}}^{\mathrm{T}}\right) & = \mathrm{tr}\left(\boldsymbol{T}\boldsymbol{H}\right) \\
				& = \mathrm{tr}\left(\boldsymbol{T}\boldsymbol{U}_{\mathbf{H}}\boldsymbol{\Sigma}_{\mathbf{H}}\boldsymbol{V}_{\mathbf{H}}^{\mathrm{T}}\right) \\
				& = \mathrm{tr}\left(\boldsymbol{\Sigma}_{\mathbf{H}}\boldsymbol{V}_{\mathbf{H}}^{\mathrm{T}}\boldsymbol{T}\boldsymbol{U}_{\mathbf{H}}\right) \\
				& = \mathrm{tr}\left(\boldsymbol{\Sigma}_{\mathbf{H}}\boldsymbol{M}\right) \\
				& = \sigma_{1}m_{11} + \sigma_{2}m_{22} + \sigma_{3}m_{33}
			\end{aligned}
		}
	\end{equation}
	
	\noindent where $\boldsymbol{M} = \boldsymbol{V}_{\mathbf{H}}^{\mathrm{T}}\boldsymbol{T}\boldsymbol{U}_{\mathbf{H}}$ is an orthogonal matrix, $m_{11}, \; m_{22}, \; m_{33}$ are the diagonal elements of $\boldsymbol{M}$, and $\boldsymbol{\Sigma}_{\mathbf{H}} = \mathrm{diag}(\sigma_{1}, \; \sigma_{2}, \; \sigma_{3})$. Because the singular values are nonnegative and the absolute value of the entries of orthogonal matrix are less than $1$, Equation (\ref{equ:max_optimization_problem2}) reach the maximum value if and only if $\boldsymbol{M} = \boldsymbol{I}$, \ie $\boldsymbol{T}^{*} = \boldsymbol{V}_{\mathbf{H}}\boldsymbol{U}_{\mathbf{H}}^{\mathrm{T}}$.
	
	Perform PCA on $\boldsymbol{X}_{\pi}\boldsymbol{X}_{\pi}^{\mathrm{T}}$ and $\widetilde{\boldsymbol{W}}\widetilde{\boldsymbol{W}}^{\mathrm{T}}$:
	
	\begin{equation}
		\boldsymbol{X}_{\pi}\boldsymbol{X}_{\pi}^{\mathrm{T}} = \boldsymbol{V}_{\pi}\boldsymbol{\Lambda}_{\pi}\boldsymbol{V}_{\pi}^{\mathrm{T}}, \qquad \widetilde{\boldsymbol{W}}\widetilde{\boldsymbol{W}}^{\mathrm{T}} = \boldsymbol{U}_{\mathbf{w}}\boldsymbol{\Lambda}_{\mathbf{w}}\boldsymbol{U}_{\mathbf{w}}^{\mathrm{T}},
	\end{equation}
	
	\noindent where $\boldsymbol{U}_{\pi}, \; \boldsymbol{U}_{\mathbf{w}}$ are the orthogonal matrixes whose column vectors are the eigenvectors, and $\boldsymbol{\Lambda}_{\mathbf{x}}, \; \boldsymbol{\Lambda}_{\mathbf{w}}$ are the diagonal eigenvalue matrixes.
	
	Perform SVD on $\boldsymbol{X}_{\pi}$ and $\tilde{\boldsymbol{W}}$, 
	
	\begin{equation}
		\boldsymbol{X}_{\pi} = \boldsymbol{U}_{\pi}\boldsymbol{\Sigma}'_{\pi}\boldsymbol{V}_{\pi}^{\mathrm{T}}, \qquad \widetilde{\boldsymbol{W}} = \boldsymbol{U}_{\mathbf{w}}\boldsymbol{\Sigma}'_{\mathbf{w}}\boldsymbol{V}_{\mathbf{w}}^{\mathrm{T}}
	\end{equation}
	
	\noindent where $\boldsymbol{U}_{\mathbf{w}}$ is defined the same as $\boldsymbol{U}$ in Equation (\ref{equ:R_i}), and $\boldsymbol{V}_{\pi}$ can be considered equaled with $\boldsymbol{V}_{i}$ when $\boldsymbol{X}_{\pi}$ and $\boldsymbol{X}_{i}$ in Equation (\ref{equ:pca_x}) obey the independent Identically distribution and the amount of the neighboring points is large enough (Because the covariance matrix is identical for a same distribution.). Then, we have 
	
	\begin{equation}
		\begin{aligned}
			\boldsymbol{X}_{\pi}\boldsymbol{X}_{\pi}^{\mathrm{T}} & = \boldsymbol{U}_{\pi}\boldsymbol{\Sigma}'_{\pi}\boldsymbol{V}_{\pi}^{\mathrm{T}}\boldsymbol{V}_{\pi}\boldsymbol{\Sigma}'_{\pi}\boldsymbol{U}_{\pi}^{\mathrm{T}} = \boldsymbol{U}_{\pi}\boldsymbol{\Sigma}_{\pi}\boldsymbol{U}_{\pi}^{\mathrm{T}} \\[2ex]
			\widetilde{\boldsymbol{W}}\widetilde{\boldsymbol{W}}^{\mathrm{T}} & = \boldsymbol{U}_{\mathbf{w}}\boldsymbol{\Sigma}'_{\mathbf{w}}\boldsymbol{V}_{\mathbf{w}}\boldsymbol{V}_{\mathbf{w}}\boldsymbol{\Sigma}'_{\mathbf{w}}\boldsymbol{U}_{\mathbf{w}}^{\mathrm{T}} = \boldsymbol{U}_{\mathbf{w}}\boldsymbol{\Sigma}_{\mathbf{w}}\boldsymbol{U}_{\mathbf{w}}^{\mathrm{T}}
		\end{aligned}
	\end{equation}
	
	\noindent Therefore, $\boldsymbol{H}\boldsymbol{H}^{\mathrm{T}}$ can be diagonalized as:
	
	\begin{equation}
		\begin{aligned}
			\boldsymbol{H}\boldsymbol{H}^{\mathrm{T}} & = \boldsymbol{X}_{\pi}\widetilde{\boldsymbol{W}}^{\mathrm{T}}\widetilde{\boldsymbol{W}}\boldsymbol{X}_{\pi}^{\mathrm{T}} \\
			& = \boldsymbol{U}_{\pi}\boldsymbol{\Sigma}'_{\pi}\boldsymbol{V}_{\pi}^{\mathrm{T}} \boldsymbol{V}_{\mathbf{w}}\boldsymbol{\Sigma}'_{\mathbf{w}}\boldsymbol{U}_{\mathbf{w}}^{\mathrm{T}}\boldsymbol{U}_{\mathbf{w}}\boldsymbol{\Sigma}'_{\mathbf{w}}\boldsymbol{V}_{\mathbf{w}}^{\mathrm{T}}\boldsymbol{V}_{\pi}\boldsymbol{\Sigma}'_{\pi}\boldsymbol{U}_{\pi}^{\mathrm{T}} \\
			& = \boldsymbol{U}_{\pi}\boldsymbol{\Sigma}'_{\mathbf{H}}\boldsymbol{U}_{\pi}^{\mathrm{T}} \\
		\end{aligned}
	\end{equation}
	
	\noindent where $\boldsymbol{\Sigma}'_{\mathbf{H}} = \boldsymbol{\Sigma}'_{\pi}\boldsymbol{V}_{\pi}^{\mathrm{T}} \boldsymbol{V}_{\mathbf{w}}\boldsymbol{\Sigma}'_{\mathbf{w}}\boldsymbol{U}_{\mathbf{w}}^{\mathrm{T}}\boldsymbol{U}_{\mathbf{w}}\boldsymbol{\Sigma}'_{\mathbf{w}}\boldsymbol{V}_{\mathbf{w}}^{\mathrm{T}}\boldsymbol{V}_{\pi}\boldsymbol{\Sigma}'_{\pi}$ is a diagonal matrix.	Meanwhile,
	
	\begin{equation}
		\begin{aligned}
			\boldsymbol{H}\boldsymbol{H}^{\mathrm{T}} & = \boldsymbol{U}_{\mathbf{H}}\boldsymbol{\Sigma}_{\mathbf{H}}\boldsymbol{V}^{\mathrm{T}}_{\mathbf{H}}\boldsymbol{V}_{\mathbf{H}}\boldsymbol{\Sigma}_{\mathbf{H}}\boldsymbol{U}_{\mathbf{H}}^{\mathrm{T}} = \boldsymbol{U}_{\mathbf{H}}\boldsymbol{\Sigma}''_{\mathbf{H}}\boldsymbol{U}_{\mathbf{H}}^{\mathrm{T}} \\
		\end{aligned}
	\end{equation}
	
	\noindent where $\boldsymbol{\Sigma}''_{\mathbf{H}} = \boldsymbol{\Sigma}_{\mathbf{H}}\boldsymbol{V}^{\mathrm{T}}_{\mathbf{H}}\boldsymbol{V}_{\mathbf{H}}\boldsymbol{\Sigma}_{\mathbf{H}}$ is a diagonal matrix. According to the property of matrix diagonalization, we have 
	
	\begin{equation}
		\boldsymbol{U}_{\mathbf{H}} = \boldsymbol{V}_{\pi},
	\end{equation}
	
	Similarly, we can obtain $\boldsymbol{V}_{\mathbf{H}} = \boldsymbol{U}_{\mathbf{w}}$ via diagonalizing $\boldsymbol{H}^{\mathrm{T}}\boldsymbol{H}$. Therefore, 
	
	\begin{equation}
		\boldsymbol{T}^{*} = \boldsymbol{U}_{\mathrm{w}}\boldsymbol{V}_{\pi}^{\mathrm{T}}
	\end{equation}

\end{proof}

\end{document}